\documentclass{ieeetj}
\pdfoutput=1
\usepackage{cite}
\usepackage{amsmath,amssymb,amsfonts}
\usepackage{graphicx,color}
\usepackage{textcomp}
\usepackage{xcolor}
\usepackage{hyperref}
\hypersetup{hidelinks}
\usepackage{algorithm}
\usepackage{algpseudocode}
\def\BibTeX{{\rm B\kern-.05em{\sc i\kern-.025em b}\kern-.08em
    T\kern-.1667em\lower.7ex\hbox{E}\kern-.125emX}}
\AtBeginDocument{\definecolor{tmlcncolor}{cmyk}{0.93,0.59,0.15,0.02}\definecolor{NavyBlue}{RGB}{0,86,125}}
\usepackage{booktabs}
\usepackage{multirow}
\usepackage{array}
\usepackage[caption=false,font=footnotesize]{subfig}
\let\labelindent\relax
\usepackage{enumitem}
\usepackage{bm}
\newtheorem{proposition}{Proposition}
\newtheorem{remark}{Remark}

\def\OJlogo{\vspace{-4pt}$<$Society logo(s) and publication title will appear here.$>$}
\def\seclogo{\vspace{10pt}$<$Society logo(s) and publication title will appear here.$>$}

\def\authorrefmark#1{\ensuremath{^{\textbf{#1}}}}
\usepackage{wrapfig}
\usepackage{xspace}
\usepackage{acronym}
\usepackage{amsmath}

\newacro{KD}{Knowledge Distillation}
\newacro{LLM}{Large Language Model}
\newacro{AI}{Artificial Intelligence}
\newacro{RLHF}{Reinforcement Learning from Human Feedback}
\newacro{API}{Application Programming Interface}
\newacro{ML}{Machine Learning}
\newacro{MLaaS}{Machine Learning as a Service}
\newacro{ADS}{Antidistillation Sampling}
\newacro{CTS}{Confidence-based Temperature Scaling}
\newacro{NTPert}{Non-Target Perturbation}
\newacro{NTPerm}{Non-Target Permutation}
\newacro{pp}{percentage point}
\newacro{DKPP}{Dark Knowledge with Permuted Predictions}
\newacro{CPS}{Cyber-Physical Systems}

\newcommand{\calD}{\mathcal{D}}

\newcommand{\calL}{\mathcal{L}}

\newcommand{\calP}{\mathcal{P}}

\newcommand{\calS}{\mathcal{S}}
\newcommand{\calT}{\mathcal{T}}

\newcommand{\calX}{\mathcal{X}}

\newcommand{\dkl}{D_{KL}}

\usepackage{autonum}

\newcommand{\bp}{\mathbf{p}}

\newcommand{\bx}{\mathbf{x}}

\newcommand{\bz}{\mathbf{z}}

\newcommand{\R}{\mathbb{R}}

\newcommand{\bdelta}{\widehat{\Delta}}
\newcommand{\pp}{\,\mathrm{pp}}
\newcommand{\lammin}{\lambda_{\min}}

\begin{document}
\receiveddate{XX Month, XXXX}
\reviseddate{XX Month, XXXX}
\accepteddate{XX Month, XXXX}
\publisheddate{XX Month, XXXX}
\currentdate{XX Month, XXXX}
\doiinfo{XXXX.2022.1234567}

\markboth{}{Ullah {et al.}}

\title{ADS-C: Antidistillation Sampling for Classification}

\author{Khawaja Abaid Ullah\authorrefmark{1}, Mohammad Javad Khojasteh\authorrefmark{1}}
\affil{Department of Electrical and Microelectronic Engineering, Rochester Institute of Technology, Rochester, NY 14623 USA}
\corresp{Corresponding author: Khawaja Abaid Ullah (email: ka4150@rit.edu).}

\begin{abstract}
Knowledge distillation enables an adversary to replicate a proprietary classifier by querying its prediction interface and training a surrogate on the returned probability vectors. Antidistillation sampling, proposed for large language models, counters this threat with an input-dependent, gradient-directed perturbation of the served distribution; its transfer to classification has not been studied. Adapting the defense to classification, we show its behavior is governed by the distribution of the teacher's per-input confidence margins. Because well-trained classifiers are severely overconfident, the direct transfer exhibits an inert window: below a closed-form-predictable threshold, it affects neither attacker nor defender; beyond it, the defense undergoes a phase transition and degrades the teacher faster than the attacker's student. Temperature softening rescales the transition in closed form, and every temperature configuration lies on the same unfavorable trade-off curve. Our method, ADS-C, composes the perturbation under a closed-form, per-input margin budget that provably preserves every served top-1 prediction, so the defended teacher's accuracy equals the undefended teacher's identically. Under this guarantee the distilled student still loses 17.4 percentage points on CIFAR-100, 29.6 on CIFAR-10, and 13.3 on Tiny-ImageNet; matching this degradation with the unmodified defense costs 27.5, 32.9, and 22.2 points of teacher accuracy. Because served labels are unchanged, a hard-label attacker gains nothing, while the defended soft output trains a student up to 29.7 points below that floor: the incentive to distill served probabilities is not merely removed but reversed. To our knowledge, ADS-C is the first antidistillation defense for classification whose utility cost is exactly zero.
\end{abstract}

\begin{IEEEkeywords}
Antidistillation, knowledge distillation, model extraction, inference-time defenses, machine learning security.
\end{IEEEkeywords}


\maketitle

\section{INTRODUCTION}
\label{sec:intro}
\IEEEPARstart{K}{nowledge} distillation (KD)~\cite{schmidhuber1992learning,hinton2015distilling} transfers the predictive function of a large \emph{teacher} model into a smaller \emph{student} model by training the student on the teacher's returned probability vector outputs. This very same mechanism that makes possible the compression of large models into smaller, efficient networks ~\cite{howard2017mobilenets,sanh2019distilbert,touvron2021training,gou2021knowledge}, also enables \emph{model extraction} wherein an adversary with black-box \ac{API} access systematically queries a deployed classifier, collects the returned probability vectors, and trains a student model that replicates the teacher's behavior at a fraction of the original cost~\cite{tramer2016stealing,zhao2025survey}. As recent events have highlighted, this threat is not just hypothetical. DeepSeek's 2025 release demonstrated that near-frontier capability may be obtained largely through distillation from proprietary models~\cite{cnbc2025deepseek}. Additionally, Anthropic recently disclosed an industrial-scale extraction campaign against Claude that produced over 16 million exchanges via approximately 24{,}000 fraudulent accounts~\cite{anthropic2026distillation}. Beyond direct intellectual-property loss~\cite{sun2023deep,jiang2026intellectual}, extracted models facilitate downstream membership-inference~\cite{shokri2017membership,salem2018ml}, evasion~\cite{papernot2017practical,biggio2013evasion,lowd2005adversarial}, and privacy attacks against the underlying training distribution~\cite{fredrikson2014privacy,fredrikson2015model,ateniese2015hacking}, raising the stakes especially in regulated domains such as medicine, and finance where machine learning applications are gaining traction ~\cite{haug2023artificial, hernandez2024financial}.
Beyond these settings, the implications of model extraction also extend to \ac{CPS} as KD is increasingly used in resource-constrained robotic platforms~\cite{cao2023learning}. Distillation-based extraction may also expose safety-critical control behavior in the physical world, raising security concerns for \ac{CPS} deployments.
 
The attacker consumes the returned probability vector, also known as the \textit{soft labels}, in contrast to \textit{hard labels}, where the output contains only the index of the predicted class. The inter-class structure of these soft labels, i.e., the relative probabilities of the classes other than the top class, constitutes the \textit{dark knowledge} of the given model~\cite{hinton2015distilling}. This dark knowledge makes soft-label distillation markedly more effective than training on hard labels alone. A naive defense against such distillation attempts would be to always return uninformative output, but that would make the deployed model useless for regular users. The natural question is therefore how to corrupt the dark knowledge a model deployed through an API serves without destroying the utility of the given model for legitimate users. Every inference-time defense implicitly negotiates this trade-off, and we argue the trade-off rate should be measured against an explicit break-even: a defense that removes one \ac{pp} of attacker accuracy per \ac{pp} of teacher accuracy, a 1:1 trade-off, is no better than deploying a worse model. A useful defense must beat the 1:1 trade-off, ideally by a wide margin in favor of the model being defended.

\ac{ADS}~\cite{savani2025antidistillation} is a compelling starting point. Proposed for autoregressive language models, \ac{ADS} perturbs each next-token distribution with an \emph{input-dependent, gradient-directed} penalty, computed from a hidden proxy student, that estimates how much serving each token would help a distilling student. Unlike static perturbation defenses, which apply a fixed transformation that an attacker can invert or meta-learn~\cite{lee2019defending,chen2023d}, the ADS perturbation is recomputed per query from hidden state. Whether this idea transfers to classification, which is among the most broadly exposed extraction surfaces in practice, has remained open. The question is not merely one of engineering: in a classification API the distilling student ingests the \emph{entire} served vector through its KL objective, whereas LLM sampling collapses the perturbation onto a single emitted token, so per unit of perturbation \ac{ADS} should be \emph{more} potent in classification, not less.

This paper adapts ADS to classification, analyzes the mechanism that governs its behavior there, and repairs its ineffectiveness with a composition rule whose utility cost is provably zero. Our contributions are as follows.
\begin{enumerate}
\item \textbf{Adaptation.} We instantiate the ADS penalty in the classification setting, preserving its derivation, and identify the structural differences from the LLM setting (Sec.~\ref{sec:ads}).
\item \textbf{Diagnosis.} We show that the behavior of the transferred defense is determined by the distribution of per-input \emph{flip thresholds}, whose cumulative distribution, before any attack or defense is run, accurately predicts the defended teacher's accuracy cost (Sec.~\ref{sec:diagnosis}). Classifier overconfidence~\cite{guo2017calibration} renders the defense inert at low strengths and unfavorable at high strengths; the perturbation itself is faithful and correctly aimed at dark knowledge, but \emph{margin-blind}.
\item \textbf{Analysis of temperature softening.} We show that softening the served distribution rescales the phase transition in closed form, and confirm the prediction in trained students: softening amplifies the perturbation at low strengths while collapsing the teacher at moderate ones, and no temperature configuration improves the trade-off (Sec.~\ref{sec:softening}).
\item \textbf{ADS-C.} We compose the ADS perturbation under a closed-form, per-input, margin-aware budget: each input receives the largest penalty strength that provably keeps its served top-1 prediction unchanged (Proposition~\ref{prop:argmax}), so the defended teacher's accuracy equals the undefended teacher's identically. The surviving perturbation still degrades the distilled student substantially, by $17.4\pp$ on CIFAR-100, $29.6\pp$ on CIFAR-10, and $13.3\pp$ on Tiny-ImageNet at zero utility cost (Secs.~\ref{sec:adsc}--\ref{sec:results}). Against the complementary hard-label attacker the defended soft output proves strictly worse as a distillation signal than the argmax it accompanies. The soft-label student falls $17.1\pp$ and $29.7\pp$ below what the hard-label attacker achieves through distillation (Sec.~\ref{sec:results_controls}).
\item \textbf{Relaxations of the guarantee.} We introduce two mechanisms with closed-form-predictable utility cost: first, stochastic enforcement of the budget; second, a minimum served strength. Both trade a controlled amount of the guarantee for attacker uncertainty. We price each relaxation against the attacker's best response between soft- and hard-label extraction, and report a negative result on margin-ranked sacrifice that further clarifies the damage mechanism (Sec.~\ref{sec:dials}).
\item \textbf{Evaluation methodology and reproducibility.} We argue that once utility cost is zero, defenses should be compared at matched served fidelity rather than matched penalty strength, and show that the two comparisons can disagree (Sec.~\ref{sec:results_frontier}). We release a library and every experiment driver: \href{https://github.com/the-kas-lab/ADS-C}{https://github.com/the-kas-lab/ADS-C}.
\end{enumerate}

\section{RELATED WORK}
\label{sec:related}
\textbf{Distillation and dark knowledge.} Distillation trains a student on a teacher's output distributions~\cite{schmidhuber1992learning,bucilua2006model,ba2014deep,hinton2015distilling}; its advantage over hard-label training is attributed to the inter-class structure of those distributions~\cite{hinton2015distilling,furlanello2018born,tang2020understanding}. Furlanello \emph{et al.}~\cite{furlanello2018born} showed diagnostically that destroying non-target structure destroys most of distillation's benefit, providing evidence that dark knowledge is the asset a defense should target.

\textbf{Model extraction.} Extraction attacks replicate a deployed model from query access~\cite{tramer2016stealing,orekondy2019knockoff,truong2021data,jahan2025black}; surveys~\cite{oliynyk2023know,zhao2025survey,jiang2023comprehensive,xu2024survey} document their practicality. We consider the distillation-based extraction attacker, who trains on returned soft labels which is the strongest and most common variant when probability vectors are exposed.

\textbf{Extraction defenses.} Training-time defenses harden the teacher's weights~\cite{ma2021undistillable,yilmaz2025adversarial,liang2024defending,cheng2025misleader} but require retraining and access to the original corpus, which is often undesirable or even infeasible at deployment scale. Detection and monitoring~\cite{juuti2019prada,gurve2024misguide,chakraborty2025radep} flag anomalous query streams, but rely on out-of-distribution assumptions that prove futile against attackers with in-distribution data~\cite{cheng2025misleader} and are circumvented by Sybil accounts~\cite{douceur2002sybil,yu2020cloudleak}. Watermarking~\cite{uchida2017embedding,jia2021entangled,szyller2021dawn,wang2024defense} supports post-hoc attribution but erects no barrier against the extraction itself. \emph{Perturbation} defenses are closest to our approach. These defenses modify the served probabilities: Reverse Sigmoid~\cite{lee2019defending}, prediction poisoning~\cite{orekondy2019prediction}, ModelGuard~\cite{tang2024modelguard}, noise-transition learning~\cite{wu2024efficient}, and information-theoretic output compression~\cite{fang2026towards}. All of these pay for protection with utility, and the static members of the family being fixed functions of the input are invertible by meta-learned recovery attacks such as D-DAE~\cite{chen2023d}. None offers a per-query \emph{guarantee} on the served top-1 prediction.

\textbf{ADS.} Antidistillation Sampling~\cite{savani2025antidistillation} is input-dependent and gradient-directed wherein a hidden proxy student defines, per query, a direction in output space that maximally harms a distilling student. It was derived and evaluated for LLM reasoning traces. Two questions are open: whether it transfers to classification, and what it costs the defender there. We answer both: the transfer's behavior is gated entirely by classifier overconfidence, and the repair (a margin budget at composition) makes the cost exactly zero.

\section{THREAT MODEL AND PROBLEM SETUP}
\label{sec:threat}
Figure~\ref{fig:threat_model} summarizes the setting. A defender trains a classifier $\calT\colon\calX\to\Delta^{K-1}$ mapping an input space $\calX$ to the probability simplex over $K$ classes and deploys it behind a query API that returns, for each query $\bx\in\calX$, a served probability vector $\hat{\bp}(\bx)\in\Delta^{K-1}$.

\textbf{Adversary.} The adversary holds (i) black-box access to the API with an arbitrary but bounded query budget; (ii) an unlabeled query corpus drawn from the same distribution $\calD$ over $\calX$ as the teacher's training data---we make no out-of-distribution assumption, which disables detection-style defenses by construction; and (iii) a student architecture $\calS$ of its choosing. The adversary lacks the teacher's parameters, and ground-truth labels. With no labels, the attacker's objective reduces from the standard hard-plus-soft KD loss to pure soft-label matching,
\begin{equation}
\label{eq:ukd}
\calL_{\mathrm{UKD}} \;=\; \dkl\bigl(\hat{\bp}(\bx)\,\big\|\,\bp_\calS(\bx)\bigr),
\end{equation}
evaluated at temperature $1$: the served vector is the attacker's sole supervisory signal, and hence the natural locus of defense.

\textbf{Why the soft-label attacker.} We scope the threat to the \emph{dark-knowledge distiller}, and this scoping is principled rather than convenient. The attacker's advantage over training on public hard labels is the served soft-label structure; an attacker who reads only the served argmax gains nothing that a labeling function of equal top-1 accuracy would not provide. Our defense will, by construction, never alter the served argmax (Sec.~\ref{sec:adsc}), so hard-label distillation against the defended API yields exactly what it would yield against the undefended API (verified empirically in Sec.~\ref{sec:results_controls}): the defense removes the soft-label advantage. The exploitation of the guarantee's determinism itself is treated in Sec.~\ref{sec:dials}.

\textbf{Defender.} The defender owns the teacher, a modest labeled holdout set, and a compact \emph{proxy} student $\calP$ that stands in for the unknown attacker student; it cannot retrain the teacher and must act at inference time only. The defender's goals are: (a) \emph{utility}: legitimate users, who consume the top-1 prediction, must be unharmed; (b) \emph{distillation resistance}: a student trained via \eqref{eq:ukd} on the served outputs should be substantially degraded.

\textbf{Metrics.} (A summary of notation is provided in Appendix~\ref{app:notation}.) We report three quantities. \emph{Teacher top-1}: accuracy of the served argmax. \emph{Student top-1}: accuracy of the distilled student. \emph{Served fidelity}: $\dkl(\bp_\calT \,\|\, \hat{\bp})$ averaged over the evaluation set, the total distributional perturbation the defense spends; this axis quantifies the defense's ``corruption budget'' and becomes essential once utility cost is zero (Sec.~\ref{sec:results_frontier}).

\begin{figure}[htbp]
    \centering
    \subfloat[The adversary constructs a labeled dataset by querying the victim API with unlabeled samples.\label{fig:threat_model_part_1}]{%
        \includegraphics[width=0.45\textwidth]{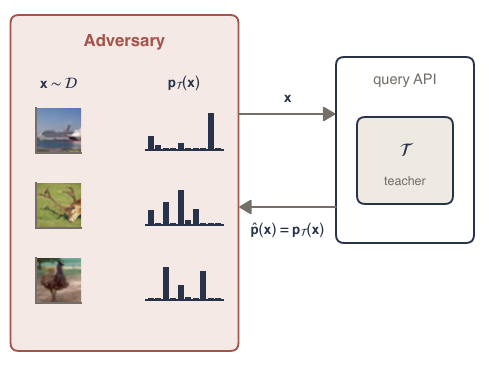}%
    }
    \hfill
    \subfloat[The adversary trains a surrogate student on the returned soft labels.\label{fig:threat_model_part_2}]{%
        \includegraphics[width=0.45\textwidth]{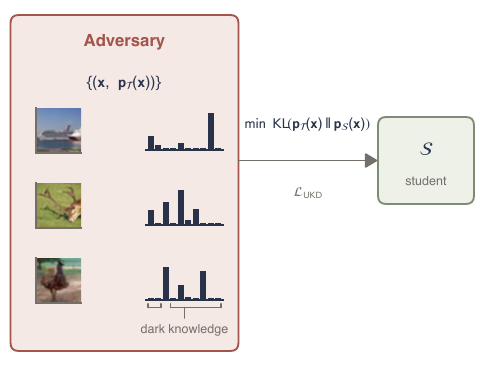}%
    }
    \caption{The unauthorized knowledge-distillation threat model.}
    \label{fig:threat_model}
\end{figure}

\section{ANTIDISTILLATION SAMPLING FOR CLASSIFICATION}
\label{sec:ads}
\subsection{Preliminaries}
\label{sec:prelims}
A pretrained teacher with parameters $\theta_\calT$ produces logits $\bz_\calT=\calT(\bx;\theta_\calT)\in\R^{K}$; the softmax $\sigma(\bz)_i = \exp(z_i)/\sum_j \exp(z_j)$ yields probabilities $\bp_\calT$. Only logit differences matter: adding a constant to $\bz$ leaves $\sigma(\bz)$ unchanged, so the margin $z_t - z_j$ between the top class $t$ and a rival $j$ is the invariant quantity that decides the argmax. Distillation trains a student to minimize \eqref{eq:ukd} against the served distribution.

\subsection{The ADS penalty}
\label{sec:penalty}
ADS asks, per query and per class: \emph{if the student came to believe class $y$ is slightly more likely on this input, would that make the student worse?} Formally, let $\ell(\theta_\calP)$ be a downstream loss of the proxy on the defender's labeled holdout. We take $\ell(\theta_\calP) = \frac{1}{N}\sum_{i=1}^{N}\dkl\bigl(\bp_\calT(\bx_i)\,\|\,\bp_\calP(\bx_i)\bigr)$,
the proxy's mean disagreement with the teacher over the holdout, following~\cite{savani2025antidistillation},
and let $g = \nabla\ell(\theta_\calP)$ be its gradient. One SGD step of distillation on class $y$ moves $\theta_\calP$ along $\nabla_{\theta}\log p(y\mid\bx;\theta_\calP)$; the resulting change in $\ell$ is, to first order, the inner product of that update with $\nabla\ell$. Exactly as in the LLM derivation, symmetry of the directional derivative converts this into a finite difference over two \emph{frozen clones} of the proxy, $\theta_{\calP\pm} = \theta_\calP \pm \varepsilon g$:
\begin{equation}
\label{eq:fd}
\widehat{\Delta}(y\mid\bx) \;=\; \frac{\log p(y\mid\bx;\theta_{\calP+}) - \log p(y\mid\bx;\theta_{\calP-})}{2\varepsilon}.
\end{equation}
A positive $\widehat{\Delta}(y\mid\bx)$ means that serving mass on class $y$ raises the proxy's holdout loss, which is good for the defender. The defended distribution adds the penalty to the teacher's log-probabilities:
\begin{equation}
\label{eq:ads_compose}
\hat{\bp}(\cdot\mid\bx) \;=\; \sigma\bigl(\log\bp_\calT(\cdot\mid\bx) + \lambda\,\bdelta(\cdot\mid\bx)\bigr),
\end{equation}
with $\lambda \ge 0$ the defense strength; $\lambda=0$ recovers the undefended API.

\textbf{Parameter choices.} (i)~Defining $\ell$ as a per-sample mean fixes the scale of $g$ independently of the holdout size; empirically $\lVert g\rVert$ is of order one ($1.00$ on CIFAR-100, $0.77$ on CIFAR-10, $1.37$ on Tiny-ImageNet), so $\lambda$ operates on a comparable scale across datasets. We sweep $\lambda\in[0,3]$, extended to $\lambda=10$ for unmodified ADS to map the saturation regime (Sec.~\ref{sec:transition}). (ii)~The step $\varepsilon$ is calibrated, not tuned: we select $\varepsilon$ to minimize the discrepancy between \eqref{eq:fd} and the exact Jacobian--vector product it approximates, obtaining $\varepsilon=3\times10^{-3}$ on both CIFAR datasets and $\varepsilon=10^{-2}$ on Tiny-ImageNet, each with a magnitude ratio of $\approx\!1.00$ (Appendix~\ref{app:epsilon}); the finite difference is a faithful estimate, which will matter for the diagnosis of Sec.~\ref{sec:diagnosis}. (iii)~The proxy (ResNet-20 on the CIFAR datasets, ResNet-18 on Tiny-ImageNet) is substantially smaller than the teacher and is trained on the teacher's training split, with a disjoint $10\%$ held out for computing $g$; the two clones are built once, offline, and frozen, so a defended query costs two small forward passes and no gradient computation. (iv)~The margin floor $m$ introduced in Sec.~\ref{sec:adsc} is analyzed in Appendix~\ref{app:margin_floor}.

\subsection{ADS Case for Classification}
\label{sec:llm_vs_cls}
The penalty~\eqref{eq:fd} is structurally identical to the next-token formulation of~\cite{savani2025antidistillation}; what differs is downstream consumption. In autoregressive generation the perturbation tilts the next-token distribution,
but the attacker observes only a single sampled token per step: a one-hot
realization whose expectation carries the tilted distribution, so the signal on
the non-sampled coordinates of $\bdelta$ reaches the student only in expectation,
across many draws. In classification distillation, the attacker observes the served
vector itself: the KL objective~\eqref{eq:ukd} ingests every coordinate of
$\bdelta$ exactly, on every query. The same perturbation is thus delivered through
a noiseless channel rather than estimated through a stochastic one, and per unit
of perturbation ADS should be \emph{more} potent in classification, not less.
The next section shows that, despite this, the direct transfer is inert over a wide regime, then trades at an unfavorable rate, and finally saturates with both parties near their accuracy floors. In addition, it locates the cause of this behavior not in the penalty but in its composition with the teacher.

\section{DIAGNOSIS OF ADS INEFFECTIVENESS}
\label{sec:diagnosis}
Swept over $\lambda$, the unmodified ADS exhibits three behaviors in sequence that make it undesirable for deployment in the classification domain. For $\lambda \lesssim 0.2$ it is \emph{inert}: on both CIFAR datasets, student and teacher accuracy each change by at most $1$--$2\pp$, within seed noise. The defense measurably affects neither party. Beyond that, it becomes active at an unfavorable rate: on CIFAR-100 at $\lambda=0.5$, the distilled student drops $5.9\pp$ while the teacher drops $12.2\pp$, a trade-off rate of $0.48$, half of break-even; on CIFAR-10 the rate is $0.85$ ($9.6\pp$ student for $11.3\pp$ teacher). Finally, for $\lambda \gtrsim 2$ the sweep \emph{saturates}: the trade-off remains sub-linear while both models degrade toward their floors, and by $\lambda=3$ the defended teacher's accuracy has fallen to the level of the very student it defends against (Fig.~\ref{fig:main_tradeoff}, diagonal curves). This section explains all three behaviors through a single mechanism and derives the design requirement for the remedy.

\subsection{Overconfidence of Classifiers}
\label{sec:overconf}
Modern, well-trained networks are systematically overconfident~\cite{guo2017calibration,pereyra2017regularizing}: their top-class probability $p_{\max}$ concentrates near 1. Table~\ref{tab:overconfidence} quantifies this for our teachers. On CIFAR-10 the median served $p_{\max}$ exceeds $0.99999$. The median non-target mass, the entire surface on which dark knowledge lives and on which an additive penalty can visibly act, is below $10^{-6}$. CIFAR-100 is less extreme (median non-target mass $\approx 6\times10^{-3}$), and Tiny-ImageNet sits between the two ($4\times10^{-4}$) despite having twice CIFAR-100's classes. The comparison is deliberate: the Tiny-ImageNet teacher is a strong ImageNet-pretrained ResNet-50 fine-tune, whereas the CIFAR teachers are trained from scratch, and the pretrained model's strength overrides the intuition that more classes produce more diffuse outputs. We find that overconfidence is determined by model quality, not class count, and strong pretrained models are precisely what production APIs deploy, so the overconfident regime studied here is the realistic one rather than a small-benchmark artifact. (At the extreme of output dimension, LLMs over $>$50k-token vocabularies operate in a naturally diffuse regime which is one reason the original ADS formulation could presuppose sufficient non-target mass.)

The overconfidence is also \emph{heterogeneous}, and this is the central observation. The top-logit margin $z_t - z_{(2)}$ (top minus runner-up) spans two orders of magnitude within a single dataset: on CIFAR-100 its 10th/50th/90th percentiles are $0.7$/$5.5$/$15.0$ log-units; on Tiny-ImageNet, $1.1$/$8.3$/$17.0$; on CIFAR-10, $3.9$/$14.4$/$21.0$. Most inputs have margins far too large for a moderate output perturbation to overcome, while a persistent minority --- $13.4\%$, $9.4\%$, and $2.4\%$ of inputs, respectively, have margins below one log-unit---lie close to a prediction flip. Any defense that treats all inputs identically will therefore be simultaneously too weak on the head of this distribution and too strong on its tail. To our knowledge, no prior antidistillation work has examined how this confidence structure interacts with an output-perturbing defense; the remainder of this section shows the interaction is decisive.

\begin{table}[!t]
\centering
\renewcommand{\arraystretch}{1.05}
\caption{Teacher confidence structure (median over the evaluation set).}
\label{tab:overconfidence}
\begin{tabular}{lccc}
\toprule
\textbf{Benchmark} & \textbf{$K$} & \textbf{$p_{\max}$} & \textbf{Non-target mass}\\
\midrule
CIFAR-10 (RN-56)   & $10$  & $>0.99999$ & $8\times10^{-7}$\\
Tiny-ImageNet (RN-50$^\dagger$) & $200$ & $0.9996$ & $4\times10^{-4}$\\
CIFAR-100 (RN-110) & $100$ & $0.994$    & $6\times10^{-3}$\\
\bottomrule
\multicolumn{4}{l}{\footnotesize $^\dagger$ImageNet-pretrained, fine-tuned at $64\times64$.}
\end{tabular}
\end{table}

\begin{figure}[!t]
  \centering
  \includegraphics[width=\columnwidth]{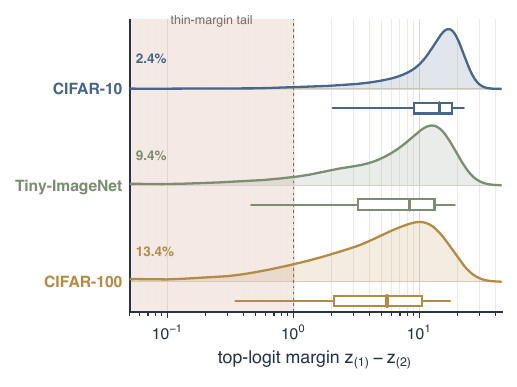}
  \caption{Overconfidence and its heterogeneity. Per-dataset distribution of the top-logit margin $z_{(1)}-z_{(2)}$ (top class minus runner-up), drawn as a density (above) and a box (below) that captures the median, interquartile range, and p5--p95 whisker on a logarithmic axis as within every dataset the margin varies over orders of magnitude.}
  \label{fig:overconfidence}
\end{figure}

\subsection{Soundness of ADS Penalty}
\label{sec:poison_anatomy}
Before locating the futility at composition we rule out the penalty. Two verifications (Fig.~\ref{fig:poison_anatomy}):

\emph{Faithfulness.} At the calibrated $\varepsilon$, the finite-difference $\bdelta$ matches the exact directional derivative with magnitude ratio $\approx\!1.00$ on both datasets (Appendix~\ref{app:epsilon}). The poison is not a numerical artifact.

\emph{Aim.} $\bdelta$ concentrates exactly where dark knowledge lives. Averaged over inputs, its value at the teacher's top class is near zero (mean $-1.1$ on CIFAR-100, $-0.4$ on CIFAR-10) while deep non-target ranks receive large positive pushes (up to $+5.6$ and $+12.3$ respectively); the class each input's poison boosts hardest sits at teacher rank $\ge 5$ for $98\%$ of CIFAR-100 and $63\%$ of CIFAR-10 inputs. ADS corrupts the inter-class structure beneath the prediction.

\begin{figure}[!t]
  \centering
  \includegraphics[width=\columnwidth]{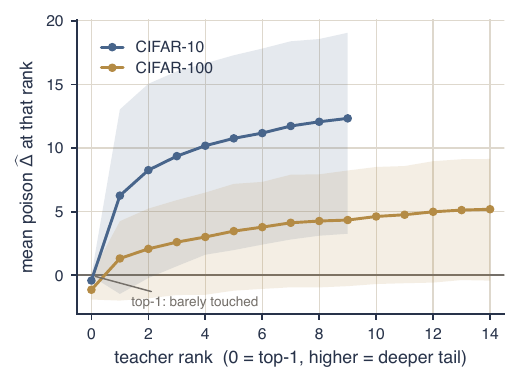}
  \caption{Anatomy of the ADS penalty. Mean $\widehat{\Delta}$ as a function of the teacher's class rank; near zero at rank 0, positive across the non-target tail. The penalty is aimed at dark knowledge, not at the argmax.}
  \label{fig:poison_anatomy}
\end{figure}

\subsection{Margin Blindness}
\label{sec:margin_blind}
Whether the served argmax survives \eqref{eq:ads_compose} is a per-input race between two numbers. For each rival $j\neq t$, let $c_j = \widehat{\Delta}_j - \widehat{\Delta}_t$ be the rate (per unit $\lambda$) at which the poison closes the margin to $j$; we define the poison's \emph{contrast} on input $\bx$ as the largest such rate,
\begin{equation}
\label{eq:contrast}
c(\bx) \;=\; \max_{j\neq t}\; c_j,
\end{equation}
the rate at which the strongest rival gains on the top class $t$. The served top-1 flips exactly when $\lambda$ exceeds the \emph{flip threshold}
\begin{equation}
\label{eq:flip}
\lambda^\ast(\bx) \;=\; \min_{j\neq t:\,c_j>0}\;\frac{z_t - z_j}{c_j},
\end{equation}
with $\lambda^\ast(\bx)=+\infty$ when no rival satisfies $c_j>0$.
Two measurements now explain the unfavorable exchange. First, the contrast is large: its median is $17.4$ (CIFAR-100) and $16.3$ (CIFAR-10) log-units per unit $\lambda$---comparable to the 90th-percentile margin and $25\times$ the 10th-percentile margin. Second, and critically, the contrast is \emph{margin-blind}: its Spearman correlation with the margin is $\rho = +0.12$ (CIFAR-100) and $+0.18$ (CIFAR-10). The poison's per-input strength is unrelated to the margin it must respect. Nothing in the objective $\ell$ ever sees the teacher's margins, so nothing in $\bdelta$ respects them.

The consequence follows directly from \eqref{eq:flip} and the thin tail of Fig.~\ref{fig:overconfidence}: as $\lambda$ grows, served predictions flip in order of their margins, beginning with the thin-margin inputs. At $\lambda=0.5$, a global penalty has already flipped $26.1\%$ of CIFAR-100 and $14.8\%$ of CIFAR-10 served predictions (Fig.~\ref{fig:composition_geometry}). These flips constitute the entirety of the teacher's cost, and they are an inefficient attack, for a reason the two parties' metrics make precise. The teacher's cost is discontinuous in the perturbation: the moment the top two entries cross, a flip registers as a full served error. The student's supervision is the served vector, which is continuous: a flip just past its threshold crosses at near-parity, so the training signal on that row is almost unchanged. Moreover, low-$\lambda$ flips land exclusively on thin-margin inputs, whose near-uniform outputs carry little of the inter-class structure a distilling student exploits (Sec.~\ref{sec:dials_lmin} confirms this independently: corruption concentrated on thin-margin rows barely moves the student). The defender thus pays full price for perturbations the student scarcely notices---at $\lambda=0.5$ on CIFAR-100 the flip component costs the teacher $12.2\pp$ while accounting for only $3.1\pp$ of student damage---whereas the argmax-safe corruption underneath is what a distilling student cannot discount; we quantify this decomposition in Sec.~\ref{sec:results_decomp}.

\begin{figure}[!t]
  \centering
  \subfloat[Margin-blindness]{\includegraphics[width=\columnwidth]{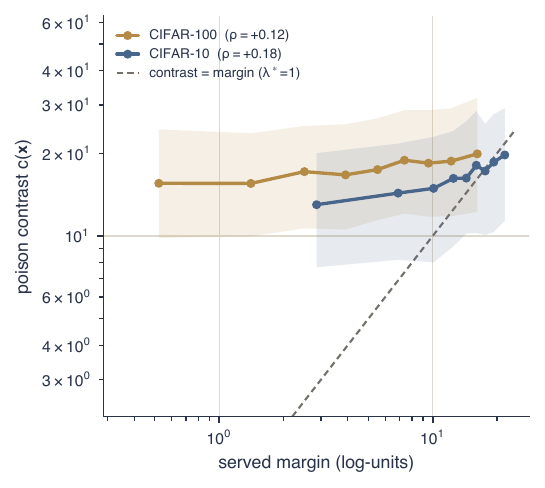}}\\
  \subfloat[Flip cascade]{\includegraphics[width=\columnwidth]{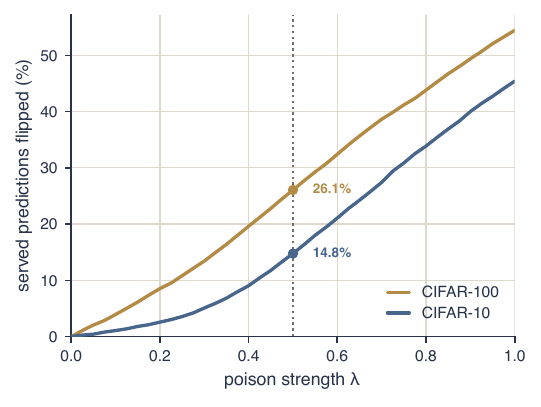}}
  \caption{Raw ADS penalty on both teachers. (a)~Median poison contrast $c(\bx)$ against served margin, binned (log--log; band: middle $50\%$), with the $c(\bx)=$~margin boundary at which a served prediction flips at $\lambda=1$. Each dataset's contrast is \emph{flat} in the margin (Spearman $\rho=+0.12$ and $+0.18$) and sits far \emph{above} the boundary across almost the whole margin range. The push is large (median $17.4$/$16.3$ log-units) and unrelated to the margin it must respect. It meets the boundary only at the largest margins. (b)~The consequence of \eqref{eq:flip}: as $\lambda$ grows, served predictions flip in order of their margins, beginning with the thin-margin tail; by $\lambda=0.5$ a global penalty has already flipped $26.1\%$ (CIFAR-100) and $14.8\%$ (CIFAR-10) of served predictions.}
  \label{fig:composition_geometry}
\end{figure}

\subsection{The inert window and the phase transition}
\label{sec:transition}
In addition to explaining the flips, Equations \eqref{eq:contrast}--\eqref{eq:flip}  make the entire teacher-cost curve predictable in closed form, before any student is trained. Let
\begin{equation}
\label{eq:flipcdf}
F(\lambda) \;=\; \Pr\bigl[\lambda^\ast(\bx) \le \lambda\bigr]
\end{equation}
be the cumulative distribution of the flip threshold over the evaluation set, computable from the teacher's and the proxy's outputs in a single forward pass. The defended teacher's accuracy drop under unmodified ADS is then the fraction of \emph{correctly classified} inputs whose threshold has been crossed. This prediction tracks the measured, trained teacher drop to within ${\approx}1$--$2\pp$ at every $\lambda$ on both datasets (CIFAR-100 at $\lambda=1$: predicted $34.9$ vs.\ measured $33.3\pp$; CIFAR-10 at $\lambda=2$: $69.8$ vs.\ $69.0\pp$). The teacher cost of unmodified ADS is thus fully determined by the flip-threshold distribution.

$F$ inherits its shape from the margin heterogeneity of Sec.~\ref{sec:overconf}, and that shape organizes the sweep into three regimes (Fig.~\ref{fig:transition}):

\begin{itemize}[leftmargin=1.2em]
\item \textbf{Inert window} ($\lambda$ below the 5th-percentile flip threshold: $\lambda<0.12$ on CIFAR-100, $\lambda<0.30$ on CIFAR-10). $F\approx0$, and---because the same overconfidence leaves almost no non-target mass for the poison to reweight---the \emph{student} is unaffected as well: measured student and teacher effects are both within $1$--$2\pp$ of the undefended anchors, indistinguishable from seed noise. In this regime the defense has no measurable effect on either party.
\item \textbf{Phase transition} ($0.35\lesssim\lambda\lesssim1$). $F$ rises through its steep section; teacher cost and student damage both grow from noise level to tens of pp within about $1.5$ octaves of $\lambda$---a narrow band of a control parameter that we sweep over four orders of magnitude. We define the \emph{transition threshold} $\lambda_c$ as the largest $\lambda$ with $F(\lambda)\le1\%$ ($\lambda_c\approx0.02$ on CIFAR-100, $0.09$ on CIFAR-10 by this strict definition; the teacher-cost onset---drop exceeding $1\pp$---lies at $0.066$ and $0.135$, respectively). $\lambda^\ast$ is approximately log-normal, so $F$ is a smooth sigmoid in $\log\lambda$ whose inflection lies at the median flip threshold ($0.91$/$1.09$).
\item \textbf{Saturation} ($\lambda\gtrsim2$). Both parties approach their floors and the sweep exhausts itself. Per-unit-$\lambda$ student damage falls by two orders of magnitude ($20.3\pp$ near $\lambda=1$ to $0.2\pp$ at $\lambda=10$ on CIFAR-100; $33.6$ to $0.3\pp$ on CIFAR-10) as the student completes its fall to---and slightly through---the chance floor ($0.6\%$ vs.\ $1\%$ chance; $6.3\%$ vs.\ $10\%$): deeply poisoned vectors are systematically wrong, not merely uninformative. The teacher's tail is equally exhausted ($64.0\to70.2\pp$ and $77.6\to86.7\pp$ of drop over $\lambda=3\to10$), and \eqref{eq:flipcdf} predicts every measured tail point to within $0.6\pp$. By $\lambda=3$ the defended teacher's accuracy has fallen to the level of the very student it defends against ($7.2\%$ vs.\ $4.2\%$; $15.1\%$ vs.\ $12.9\%$), and by $\lambda=10$ on CIFAR-10 \emph{below} it ($6.0\%$ served vs.\ $6.3\%$ distilled): the direct transfer does not merely trade unfavorably in this regime---it eliminates, and finally inverts, the accuracy advantage it exists to protect.
\end{itemize}

Note the ordering: CIFAR-10, the \emph{more} overconfident teacher, has the \emph{wider} inert window showing that inertness tracks overconfidence, as the mechanism predicts. The same computation places Tiny-ImageNet's transition threshold at $\lambda_c\approx0.06$, between the two CIFAR values, consistent with its intermediate overconfidence (Table~\ref{tab:overconfidence}); and its closed-form $F(\lambda)$ predicts the measured teacher cost of the unmodified sweep to within $2\pp$ at every $\lambda$ (e.g., $33.2\pp$ predicted versus $31.4\pp$ measured at $\lambda=2$), validating the diagnosis on a third teacher whose confidence arises from ImageNet pretraining rather than from a narrow benchmark.

One further property of \eqref{eq:flipcdf} is used in the next section. Serving the teacher at temperature $\tau$ divides every logit gap by $\tau$ while leaving the poison contrast unchanged, so every flip threshold is divided by exactly $\tau$. A threshold of $\lambda^\ast/\tau$ is crossed by a given $\lambda$ precisely when $\lambda^\ast$ is crossed by $\tau\lambda$, so the flipped fraction at strength $\lambda$ under temperature $\tau$ equals the unsoftened fraction at $\tau\lambda$:
\begin{equation}
\label{eq:ftau}
F_\tau(\lambda) \;=\; F(\tau\lambda),
\end{equation}
a parameter-free prediction (Fig.~\ref{fig:tempslide}): temperature shifts the entire transition curve toward smaller $\lambda$ by the factor $1/\tau$. Softening does not alter the mechanism; it rescales the $\lambda$ axis on which the mechanism operates.
\begin{figure}[!t]
  \centering
  \subfloat[CIFAR-100]{\includegraphics[width=\columnwidth]{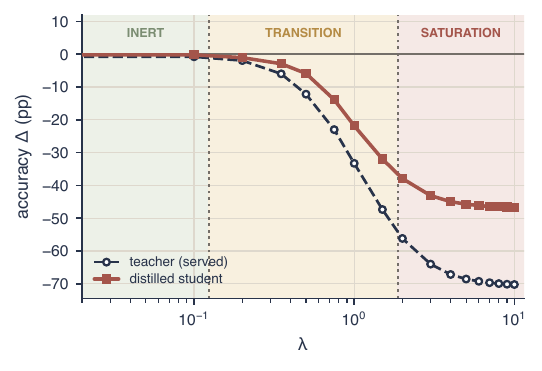}}\\
  \subfloat[CIFAR-10]{\includegraphics[width=\columnwidth]{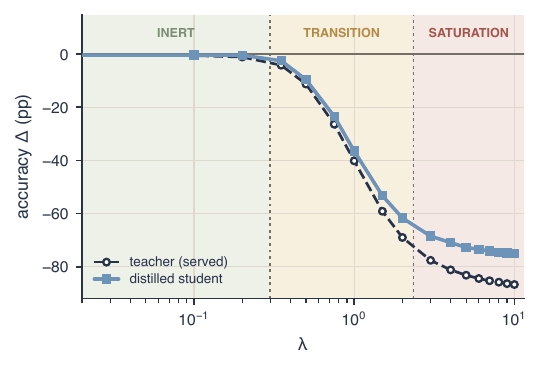}}
  \caption{The inert window and the phase transition (log-$\lambda$). Measured accuracy change of \emph{both} parties under unmodified ADS across the three regimes shaded by the flip-threshold CDF \eqref{eq:flipcdf} (inert: $F\le5\%$; saturation: $F\ge80\%$, near $\lambda\approx2$). In the inert window ($\lambda<0.12$ / $0.30$) the served teacher and the distilled student are both flat within noise---the defense measurably affects neither party---after which both move together through the transition, the teacher falling at least as fast as the student, into saturation.}
  \label{fig:transition}
\end{figure}

The design requirement is now explicit. The penalty direction is sound; what is missing is margin-awareness, and margin-awareness cannot be injected through the poison pipeline. The one free choice inside that pipeline is where the proxy sits, and we tested it: even a proxy distilled directly from the teacher's served probabilities, corresponding to the state a distilling attacker's student would be in during distillation rather than a proxy trained conventionally on labeled data, produces a poison whose contrast--margin correlation remains $\rho \approx 0$ (CIFAR-100). No placement of the proxy makes $\bdelta$ margin-aware, because the objective $\ell$ it derives from never sees a margin. Margin-awareness must therefore be enforced at composition,
the only point where the margin and the poison are simultaneously visible. That is ADS-C (Sec.~\ref{sec:adsc}). First, however, we examine the natural alternative, whose failure the theory above already predicts and whose failure mode is itself informative.

\section{TEMPERATURE SOFTENING}
\label{sec:softening}
Table~\ref{tab:overconfidence} suggests an obvious remedy, that is, if overconfidence starves ADS of non-target mass, soften the served distribution, with a constant distillation temperature~\cite{hinton2015distilling,guo2017calibration} or a confidence-conditioned variant~\cite{balanya2024adaptive,joy2023sample,mozafari2018attended}, to create dark knowledge for the poison to corrupt. This intuition is partially correct, but what it yields is a demonstration of the mechanism rather than a viable defense. Softening relaxes both of the constraints that teacher confidence imposes at once. First, it inflates the non-target mass visible to the attacker's KL loss (activating the poison), and secondly, it shrinks every served margin by the same factor (activating the flips, per \eqref{eq:ftau}). We serve $\sigma(\bz/T)$ for $T\in\{2,4\}$, alone and composed with the raw ADS penalty, over $\lambda\in[0,0.5]$, on both datasets with three seeds, extending the poison-free sweep on CIFAR-10 to $T\in\{6,8,10,12\}$. (Throughout, $\tau$ in \eqref{eq:ftau} and $T$ denote the same served temperature; we write $T$ when quoting experimental settings.) We report four findings.

\begin{figure}[!t]
  \centering
  \subfloat[CIFAR-100]{\includegraphics[width=\columnwidth]{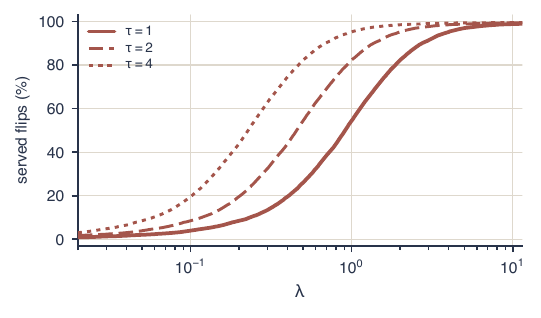}}\\
  \subfloat[CIFAR-10]{\includegraphics[width=\columnwidth]{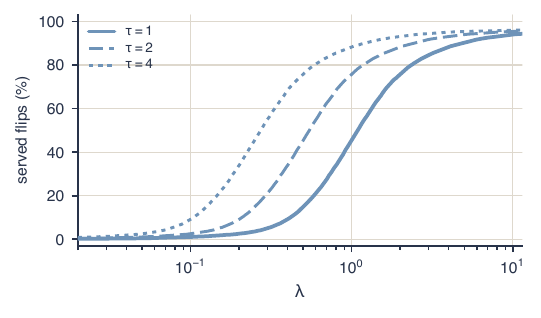}}
  \caption{Temperature slides the transition curve (log-$\lambda$). Served flip fraction under temperature $\tau$ is $F_\tau(\lambda)=F(\tau\lambda)$ \eqref{eq:ftau}: serving at $\tau$ divides every flip threshold by $\tau$, shifting the entire transition curve toward smaller $\lambda$ by the factor $1/\tau$ ($\tau=2,4$ slide $F$ left by $2\times$, $4\times$). This is a parameter-free prediction---no student is trained---and Secs.~\ref{sec:enabler}--\ref{sec:collapse} confirm it against the trained runs.}
  \label{fig:tempslide}
\end{figure}

\subsection{Softening alone can improve the attacker's student}
\label{sec:gift}
Softened teacher outputs are the classic recipe for knowledge distillation~\cite{hinton2015distilling}. However, whether softening assists the adversary is governed by how much dark knowledge the teacher has to reveal. 

On CIFAR-100, with no poison at all ($\lambda=0$), serving at $T=4$ (median served confidence $0.42$) \emph{improves} the distilled student by $+4.7\pp$ over the undefended baseline ($T=2$: $+0.9\pp$). On CIFAR-10 the benefit is absent: pushing to higher temperatures only degrades the student, monotonically, from $-1.0\pp$ at $T=4$ to $-6.5\pp$ at $T=12$. At $T=10$, CIFAR-10 reproduces CIFAR-100's $T=4$ operating point (served median confidence $0.42$ on both), yet CIFAR-100 gains $+4.7\pp$ where CIFAR-10 loses $-5.2\pp$. Tiny-ImageNet interpolates between the two exactly as this account predicts: its non-target mass is intermediate (Table~\ref{tab:overconfidence}), and so is the benefit gained by temperature scaling, that is, $+1.6\pp$ at $T=4$,  which lies between CIFAR-10's absent benefit and CIFAR-100's $+4.7\pp$. At matched served confidence the sign of the effect is therefore set by how much usable non-target structure the teacher has to reveal, not by the magnitude of the temperature parameter or by the class count. Wherever dark knowledge exists, revealing it assists the adversary; where it does not, softening only dilutes the supervision the attacker already had. Note that constant-$T$ scaling preserves the argmax, so this assistance is not even purchased with teacher accuracy: its only cost is served fidelity, which is the axis on which Sec.~\ref{sec:dominated} prices every softening variant.

\subsection{Softening activates ADS inside the inert window}
\label{sec:enabler}
Composed with the poison, softening behaves exactly as \eqref{eq:ftau} predicts: it shifts the entire transition curve into the formerly inert window. At $\lambda=0.2$---inert under plain serving---the poison's marginal effect on the student (measured relative to the same serving at $\lambda=0$, which nets out the shift of Sec.~\ref{sec:gift}) grows from $-1.1\pp$ (unsoftened) to $-3.0\pp$ ($T=2$) to $-17.8\pp$ ($T=4$) on CIFAR-100, and from $-0.4$ to $-5.8$ to $-27.7\pp$ on CIFAR-10 (Fig.~\ref{fig:ect}a): the same small $\lambda$ becomes $17\times$--$69\times$ more potent. This provides direct causal evidence that the availability of non-target mass, not the penalty strength, is what limits the poison at low $\lambda$. CIFAR-10 sharpens the point: the same $T=4$ serving that conferred no benefit on a clean student (Sec.~\ref{sec:gift}) multiplies the poison's potency $69\times$---what softening manufactures is probability mass for the poison to reweight, whether or not that mass encodes anything a student could exploit. The teacher-side cost agrees with the theory quantitatively: the pre-registered prediction $F(\tau\lambda)$ placed the $T=4$, $\lambda=0.2$ teacher cost at $26.9\pp$ (CIFAR-100) and $30.1\pp$ (CIFAR-10); the trained runs measured $25.1$ and $29.1\pp$.

\subsection{The same rescaling collapses the teacher}
\label{sec:collapse}
The teacher-side cost just quoted is the other half of the same mechanism: softening activates the poison by shrinking the very margins the poison then crosses, so enablement and collapse are one phenomenon, not two. On the student-versus-teacher-cost plane (Fig.~\ref{fig:ect}b), every constant-temperature variant ($T=1$, $2$, $4$, on both datasets) collapses onto a single trade-off curve. Temperature never improves the exchange rate rather it only moves the operating point further along it. A worked example makes the severity concrete. A typical softened CIFAR-100 row has served margin $0.38$ against poison contrast $16.8$, so its prediction flips at $\lambda^\ast \approx 0.022$; measured end-to-end, $T=4$ serving loses $25.1$/$29.1\pp$ of teacher accuracy (CIFAR-100/CIFAR-10) at $\lambda=0.2$, a strength that is inert under plain serving, and $45.0$/$56.3\pp$ at $\lambda=0.35$ (Fig.~\ref{fig:ect}). Softening manufactures precisely the thin margins on which a margin-blind poison is most destructive because the two mechanisms compete for the same resource, since the softener consumes the margin the poison needs in order to act safely.

\subsection{At matched served fidelity, softening is dominated}
\label{sec:dominated}
Constant temperature is the bluntest possible softener, so its failure does not by itself close the question. We therefore construct the strongest softening variant we could: a \emph{gated} operator, inspired by~\cite{li2022asymmetric}, that lowers the confidence of over-confident inputs only, and does so by lowering the top logit alone, leaving all non-target structure intact for the dark-knowledge channel (Appendix~\ref{app:scaler_def}). It is composed with the margin budget of Sec.~\ref{sec:adsc}, so that its teacher cost is identically zero. Section~\ref{sec:results_frontier} evaluates this combination across five gates and the full $\lambda$ grid against poison-only ADS-C on the axis a defender actually spends, served fidelity: the softener pays a large fidelity cost \emph{up front}, before any poison flows, and at every matched fidelity level in the operating regime poison-only ADS-C damages the student more (Fig.~\ref{fig:fidelity_frontier}). We therefore include no softening in the proposed defense: softening serves in this paper as the probe that establishes dark knowledge as the resource the poison consumes, and as a defense component it is dominated.

\begin{figure}[!t]
  \centering
  \subfloat[CIFAR-100]{\includegraphics[width=\columnwidth]{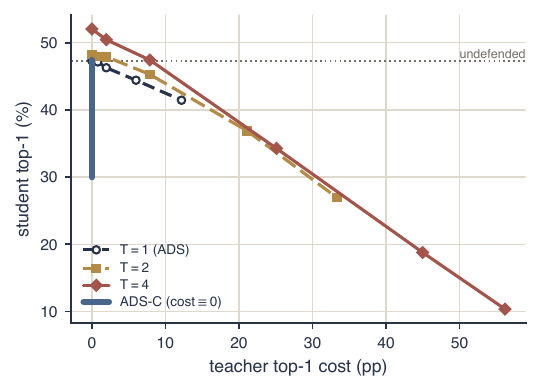}}\\
  \subfloat[CIFAR-10]{\includegraphics[width=\columnwidth]{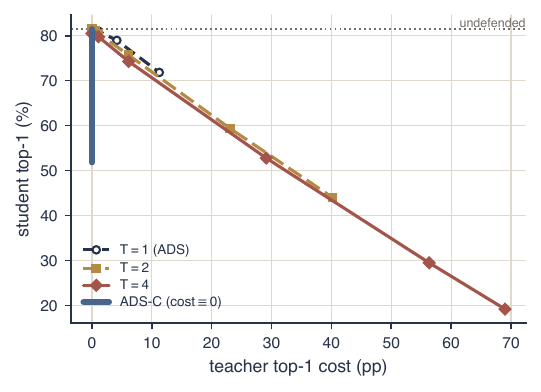}}
  \caption{Constant-temperature serving $\pm$ ADS on the student-versus-teacher-cost trade-off plane (E-CT; 3 seeds). Every constant-temperature arm ($T{=}1,2,4$) collapses onto a \emph{single} unfavorable curve: softening activates the poison---at $\lambda=0.2$ the $T{=}4$ poison is $17\times$/$69\times$ more potent than unsoftened, in quantitative agreement with the pre-registered $F(\tau\lambda)$ prediction---but only by shrinking the margins the poison then crosses, so the extra student damage is bought at proportional teacher cost. ADS-C (vertical line, teacher cost $\equiv0$ by construction) reaches student damage no temperature configuration attains without spending the teacher accuracy the softener pays up front.}
  \label{fig:ect}
\end{figure}

\section{ADS-C}
\label{sec:adsc}
The diagnosis of Sec.~\ref{sec:diagnosis} dictates the design requirement: retain the poison, but enforce margin-awareness at composition, per input. ADS-C implements this requirement in closed form.

\subsection{The margin budget}
Fix an input $\bx$ with teacher logits $\bz$, top class $t=\arg\max_k z_k$, and poison $\bdelta$. For every rival $j\neq t$, let $\mathrm{gap}_j = z_t - z_j$ denote the served margin to that rival and recall from \eqref{eq:contrast} the rate $c_j = \widehat{\Delta}_j - \widehat{\Delta}_t$ at which the poison closes it. Given the requested strength $\lambda$ and a margin floor $m>0$, the per-input budget is
\begin{equation}
\label{eq:budget}
\lambda(\bx) \;=\; \operatorname{clip}\Bigl(\;\min_{j:\,c_j>0}\;\frac{\mathrm{gap}_j - m}{c_j}\;,\;0,\;\lambda\Bigr),
\end{equation}
with the convention that a minimum over an empty set is $+\infty$, and the API serves $\hat{\bp} = \sigma\bigl(\log \bp_\calT + \lambda(\bx)\,\bdelta\bigr)$ (Algorithm~\ref{alg:adsc}). The budget is the flip threshold \eqref{eq:flip} computed with $m$ log-units of every margin held in reserve, clipped to the feasible range, and its behavior at the extremes is the intended one: rivals the poison pushes down ($c_j\le0$) never bind, and an input none of whose rivals gain passes the full requested strength. Conversely, an input whose margin is already at or below the floor receives no poison rather than a flip. Figure~\ref{fig:pipeline} traces one query through the full pipeline.

\begin{algorithm}[!t]
\caption{ADS-C}
\label{alg:adsc}
\begin{algorithmic}[1]
\Require query $\bx$; teacher $\calT$; frozen proxy clones $\calP_{+},\calP_{-}$
\Require step $\varepsilon$; requested strength $\lambda$; margin floor $m$
\Ensure served probability vector $\hat{\bp}$
\State $\bz \gets \calT(\bx)$; \quad $t \gets \arg\max_k z_k$
\State $\bdelta \gets \bigl[\log\sigma(\calP_{+}(\bx)) - \log\sigma(\calP_{-}(\bx))\bigr]/(2\varepsilon)$
\For{$j \neq t$}
  \State $\mathrm{gap}_j \gets z_t - z_j$; \quad $c_j \gets \widehat{\Delta}_j - \widehat{\Delta}_t$
\EndFor
\State $\lambda(\bx) \gets \operatorname{clip}\bigl(\min_{j: c_j>0}(\mathrm{gap}_j - m)/c_j,\; 0,\; \lambda\bigr)$
\State \Return $\hat{\bp} \gets \sigma\bigl(\log\sigma(\bz) + \lambda(\bx)\,\bdelta\bigr)$
\end{algorithmic}
\end{algorithm}

\subsection{The guarantee}
\begin{proposition}[Zero-cost argmax preservation]
\label{prop:argmax}
For every input $\bx$ with unique teacher argmax $t$, the served vector of Algorithm~\ref{alg:adsc} satisfies, for all $j\neq t$,
\[
\log\hat{p}_t - \log\hat{p}_j \;\ge\; \min\bigl(m,\; \mathrm{gap}_j\bigr) \;>\; 0 .
\]
Consequently $\arg\max_k \hat{p}_k = \arg\max_k p_{\calT,k}$ for every query, and the defended teacher's top-1 accuracy equals the undefended teacher's top-1 accuracy identically---as a deterministic, per-query property, not in expectation. Moreover, $\lambda(\bx)$ is maximal: it is the largest value in $[0,\lambda]$ for which the stated margin bound holds.
\end{proposition}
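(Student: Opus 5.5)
The plan is to reduce everything to a single per-rival identity for the served log-odds and then read off each claim by cases on the sign of $c_j$ and on which branch of the clip in \eqref{eq:budget} is active. First, fix $\bx$ and write $\lambda(\bx)$ for the budget. Since $\log\sigma(\bz)_k = z_k - \log\sum_i e^{z_i}$ and softmax is shift-invariant (Sec.~\ref{sec:prelims}), the log-partition terms of both the teacher and the served softmax cancel in any difference of coordinates. This gives, for every $j\neq t$,
\[
\log\hat{p}_t - \log\hat{p}_j \;=\; (z_t - z_j) + \lambda(\bx)\bigl(\widehat{\Delta}_t - \widehat{\Delta}_j\bigr) \;=\; \mathrm{gap}_j - \lambda(\bx)\,c_j .
\]
This is the same linear-in-$\lambda$ margin that defines the flip threshold \eqref{eq:flip}, and everything else is bookkeeping on it.

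Next, I will verify the lower bound by cases, always using $\lambda(\bx)\in[0,\lambda]$ from the clip.
\begin{itemize}
\item \emph{Case $c_j\le 0$.} The identity gives a quantity $\ge \mathrm{gap}_j \ge \min(m,\mathrm{gap}_j)$.
\item \emph{Case $c_j>0$ and $\lambda(\bx)=0$.} The quantity equals $\mathrm{gap}_j$, which again suffices.
\item \emph{Case $c_j>0$ and $\lambda(\bx)>0$.} Here the unclipped minimum $M=\min_{k:c_k>0}(\mathrm{gap}_k-m)/c_k$ is positive and $\lambda(\bx)\le M\le(\mathrm{gap}_j-m)/c_j$. Multiplying by $c_j>0$ yields $\mathrm{gap}_j-\lambda(\bx)c_j\ge m\ge\min(m,\mathrm{gap}_j)$.
\end{itemize}
Strict positivity follows from $m>0$ and $\mathrm{gap}_j>0$, the latter by uniqueness of the teacher argmax. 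Then $\hat p_t>\hat p_j$ for all $j\ne t$, so the served argmax is $t$ and accuracy is unchanged deterministically per query.

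For maximality, I will take any $\lambda'\in(\lambda(\bx),\lambda]$ and exhibit a rival $j^\ast$ violating the bound. Since $\lambda(\bx)<\lambda$, the upper clip is inactive, so the candidate set $\{k:c_k>0\}$ is nonempty. Let $j^\ast$ attain $M$. The argument splits on the sign of $M$.
\begin{itemize}
\item \emph{If $M\ge 0$.} Then $\lambda(\bx)=M$ and $\mathrm{gap}_{j^\ast}\ge m$, so $\min(m,\mathrm{gap}_{j^\ast})=m$. But $\lambda'>M$ gives $\mathrm{gap}_{j^\ast}-\lambda'c_{j^\ast}<m$.
\item \emph{If $M<0$.} Then $\lambda(\bx)=0$ and $\mathrm{gap}_{j^\ast}<m$, so the required bound is $\mathrm{gap}_{j^\ast}$. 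Any $\lambda'>0$ gives $\mathrm{gap}_{j^\ast}-\lambda'c_{j^\ast}<\mathrm{gap}_{j^\ast}$.
\end{itemize}

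I expect the main obstacle to be this maximality step rather than the bound itself. The target $\min(m,\mathrm{gap}_j)$ switches form depending on whether the binding rival is above or below the floor, and the clip has three active regimes (lower, interior, upper), including the empty-minimum convention. I will handle this by first disposing of the upper-clip and empty-set cases, which make $\lambda(\bx)=\lambda$ and leave nothing to prove. Only then will I split on the sign of $M$ as above.
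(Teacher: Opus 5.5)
Your proof is correct and follows essentially the paper's argument: the same log-odds identity $\log\hat p_t-\log\hat p_j=\mathrm{gap}_j-\lambda(\bx)c_j$, and the same case analysis on the sign of $c_j$ and on which branch of the clip is active. The only difference is in maximality. The paper rewrites each rival's constraint as an upper bound on $\lambda'$ and identifies the whole feasible set as the interval $[0,\lambda(\bx)]$, whereas you exhibit the binding rival $j^\ast$ that every $\lambda'>\lambda(\bx)$ violates; this is a minor variation that fully suffices for the ``largest value'' claim.
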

\begin{proof}
Softmax composition preserves log-score differences, so $\log\hat{p}_t - \log\hat{p}_j = \mathrm{gap}_j - \lambda(\bx)\,c_j$. If $c_j \le 0$ this is $\ge \mathrm{gap}_j$. If $c_j > 0$, then by \eqref{eq:budget} $\lambda(\bx) \le (\mathrm{gap}_j - m)/c_j$ whenever that bound is nonnegative, giving $\mathrm{gap}_j - \lambda(\bx)c_j \ge m$; when the bound is negative (i.e., $\mathrm{gap}_j < m$), the clip yields $\lambda(\bx)=0$ and the difference equals $\mathrm{gap}_j$. In all cases the difference is $\ge \min(m, \mathrm{gap}_j) > 0$, so $t$ remains the served argmax. Maximality: constraints with $c_j\le0$ hold for every $\lambda'\ge0$; for $c_j>0$, the constraint $\mathrm{gap}_j - \lambda' c_j \ge \min(m,\mathrm{gap}_j)$ reads $\lambda' \le (\mathrm{gap}_j-m)/c_j$ when $\mathrm{gap}_j\ge m$ and $\lambda'\le0$ when $\mathrm{gap}_j<m$. The feasible subset of $[0,\lambda]$ is therefore the interval $\bigl[0,\;\operatorname{clip}\bigl(\min_{j:c_j>0}(\mathrm{gap}_j-m)/c_j,\,0,\,\lambda\bigr)\bigr]$, whose right endpoint is exactly \eqref{eq:budget}.
\end{proof}

\begin{remark}
The guarantee is unconditional: it requires no retraining, no access to training data, and no distributional assumption, and it holds for every input, including out-of-distribution queries. Empirically we confirm it to machine precision: across all $360$ evaluated (dataset, variant, $\lambda$, seed) configurations, the largest observed teacher top-1 deviation is $1.4\times10^{-6}\pp$---orders of magnitude below the $10^{-2}\pp$ contribution of a single test image, i.e., floating-point noise in the accuracy average rather than any flipped prediction (Sec.~\ref{sec:results}).
\end{remark}

\subsection{Design discussion}
\textbf{The floor $m$.} The floor is the service level the defender guarantees on served margins: every served prediction retains at least $\min(m,\text{original margin})$ log-units of separation. We use $m=0.05$ throughout; the guarantee itself is $m$-independent (teacher drop is zero for every $m$), and Appendix~\ref{app:margin_floor} shows the poison passed through the budget is insensitive over $m\in[0.01,0.2]$ (served poison-KL varies by ${\approx}10\%$ across the range).

\textbf{Overconfidence cuts both ways.} The same heterogeneity that made the global penalty fatal makes the budget cheap: most inputs sit far from the floor, so most of the requested strength survives. At $\lambda=0.5$ the mean effective strength is $0.44$ (CIFAR-100) and $0.47$ (CIFAR-10), corresponding to $88$--$95\%$ of the request, while the thin-margin tail, which a global $\lambda$ would flip first, is individually protected. The budget spends the poison precisely where the confidence structure of Fig.~\ref{fig:overconfidence} has room for it.

\textbf{Deployment properties.} ADS-C is inference-time only. Per query, beyond the teacher's forward pass, it costs two forward passes through the frozen proxy clones, each substantially smaller than the teacher, and $O(K)$ arithmetic for \eqref{eq:budget}. The served output remains a valid probability vector with unchanged argmax. Like ADS, and unlike static perturbation defenses~\cite{lee2019defending,chen2023d}, the perturbation is input-dependent and computed from state the attacker cannot observe (the proxy and its gradient $g$), so there is no fixed transformation to invert. Additionally, unlike ADS, its utility cost is identically zero.

\begin{figure}[!t]
  \centering
  \includegraphics[width=\columnwidth]{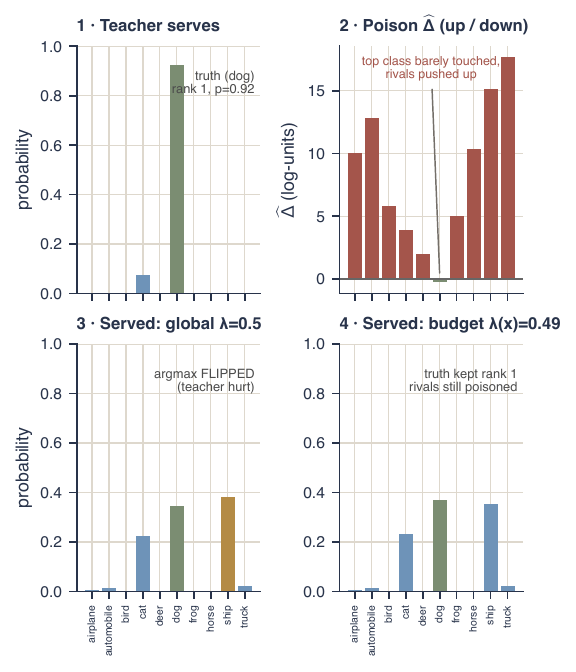}
  \caption{One CIFAR-10 query through the pipeline (panels~1--4). The teacher serves a confident vector (panel~1; truth \emph{dog}, $p_{\max}{=}0.92$); the poison $\bdelta$ leaves the top class essentially untouched ($\widehat{\Delta}_t{=}{-0.3}$) while pushing every rival up, hardest on deep-ranked classes (panel~2); a global $\lambda=0.5$ flips the served argmax to the hard-pushed \emph{ship} (panel~3, teacher hurt), while the margin budget \eqref{eq:budget} spends $\lambda(\bx)=0.49$---just short of the flip---leaving the top-1 intact and the dark knowledge corrupted (panel~4). Pushes surface in the served vector only where the teacher leaves probability mass (\emph{ship}, \emph{cat}); equally large pushes on near-massless classes remain invisible at this strength---the starvation of Sec.~\ref{sec:overconf} in miniature.}
  \label{fig:pipeline}
\end{figure}

\section{EXPERIMENTS}
\label{sec:results}

\subsection{Setup}
\label{sec:setup}
\textbf{Benchmarks.} CIFAR-100 (teacher ResNet-110 $\to$ student ResNet-20), CIFAR-10 (ResNet-56 $\to$ ResNet-20), and Tiny-ImageNet ($K=200$, $64\times64$; teacher ResNet-50, ImageNet-pretrained and fine-tuned with a small-input stem $\to$ student ResNet-18). The proxy always matches the student architecture; Appendix~\ref{app:repro} reports training recipes. Undefended anchors: CIFAR-100 teacher $71.23\%$, distilled student $47.34\pm0.37\%$; CIFAR-10 teacher $92.70\%$, student $81.44\pm0.43\%$; Tiny-ImageNet teacher $80.07\%$, student $47.84\pm0.59\%$.

\textbf{Attacker.} The fixed distillation attacker of Sec.~\ref{sec:threat}: it queries the API once per image with the full unlabeled training split ($50\,000$ queries on the CIFAR datasets, $100\,000$ on Tiny-ImageNet), unaugmented and at native resolution, then trains the student for 50 epochs on the returned (image, served-vector) pairs with the pure soft-label loss \eqref{eq:ukd} at temperature 1 (full optimizer settings in Appendix~\ref{app:repro}). Every configuration is run over three seeds ($\{92, 786, 1337\}$); we report mean\,$\pm$\,std.

\textbf{Defense grid.} $\lambda \in \{0, 0.1, 0.2, 0.35, 0.5, 0.75, 1, 1.5, 2, 3\}$; margin floor $m=0.05$; $\varepsilon = 3\times10^{-3}$ (CIFAR) and $10^{-2}$ (Tiny-ImageNet; Appendix~\ref{app:epsilon}). \emph{ADS} denotes the unmodified transfer \eqref{eq:ads_compose}; \emph{ADS-C} denotes Algorithm~\ref{alg:adsc}; \emph{ADS-C + softener} denotes the ablation of Sec.~\ref{sec:results_frontier}; the guarantee relaxations $q$ and $\lammin$ (Sec.~\ref{sec:dials}) default to $q=1$, $\lammin=0$.

\subsection{Main result: the trade-off rate before and after budgeting}
\label{sec:results_headline}
Figure~\ref{fig:main_tradeoff} plots student accuracy against teacher accuracy for both methods on all three benchmarks; Table~\ref{tab:headline} reports representative operating points.

\textbf{Unmodified ADS.} The direct transfer degrades the student only by degrading the teacher comparably or faster. On CIFAR-100 the sweep passes through $(-12.2\pp\text{ teacher}, -5.9\pp\text{ student})$ at $\lambda=0.5$ and ends at $(-64.0\pp, -43.1\pp)$ at $\lambda=3$. The teacher suffers more than the student. Tiny-ImageNet exhibits the same shape shifted by its wider inert window (Sec.~\ref{sec:transition}). Its exchange is still near-inert at $\lambda=0.5$ ($-1.7\pp$ teacher, $0.0\pp$ student), first becomes active at $\lambda=1$ ($-9.0\pp$ teacher for $-4.6\pp$ student, rate $0.51$), and reaches $(-46.7\pp, -28.8\pp)$ at $\lambda=3$.

\textbf{ADS-C.} With the budget active, teacher top-1 drop is $0.00\pp$ at every cell, while the student falls monotonically with $\lambda$. It falls to $29.95\pm0.23\%$ on CIFAR-100 ($-17.4\pp$, $37\%$ relative), $51.80\pm0.38\%$ on CIFAR-10 ($-29.6\pp$, $36\%$ relative), and $34.51\pm0.28\%$ on Tiny-ImageNet ($-13.3\pp$, $28\%$ relative) at $\lambda=3$.

\textbf{Matched-damage comparison.} For unmodified ADS to inflict the student damage that ADS-C delivers at zero cost, it must spend $27.5\pp$ (CIFAR-100), $32.9\pp$ (CIFAR-10), and $22.2\pp$ (Tiny-ImageNet) of teacher accuracy. The margin budget converts an unfavorable trade into a free one.

\begin{figure*}[!t]
  \centering
  \subfloat[CIFAR-100]{\includegraphics[width=0.32\textwidth]{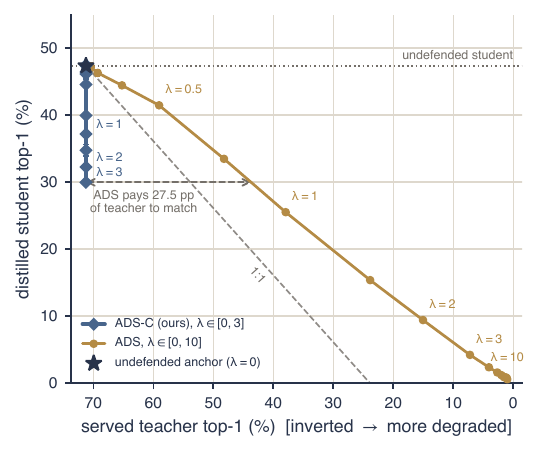}}\hfill
  \subfloat[CIFAR-10]{\includegraphics[width=0.32\textwidth]{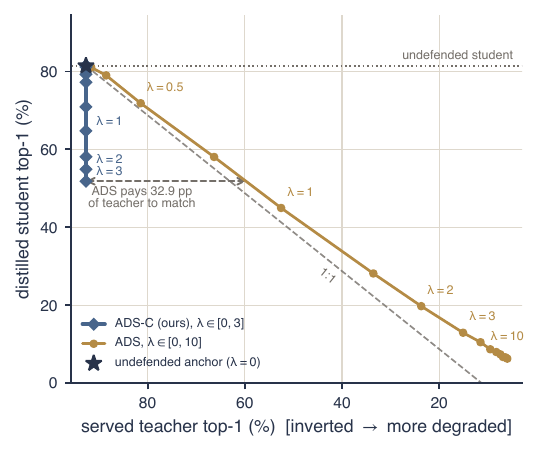}}\hfill
  \subfloat[Tiny-ImageNet]{\includegraphics[width=0.32\textwidth]{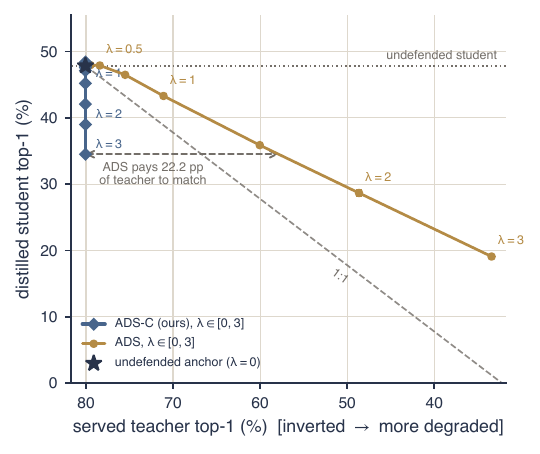}}
  \caption{The main result: distilled-student top-1 versus served-teacher top-1 (horizontal axis inverted---the teacher degrades left to right; 3 seeds; the swept $\lambda$ is annotated along each curve, with ADS extended to $\lambda=10$ on the CIFAR panels; dashed gray: the 1:1 break-even through the undefended anchor $\star$). Unmodified ADS traces a diagonal: it degrades the student only by degrading the teacher comparably or faster. ADS-C is the vertical segment at zero teacher cost (Proposition~\ref{prop:argmax}). The horizontal arrow marks the teacher accuracy unmodified ADS must spend to match ADS-C's free student damage at $\lambda=3$.}
  \label{fig:main_tradeoff}
\end{figure*}

\begin{table}[!t]
\centering
\renewcommand{\arraystretch}{1.05}
\caption{Representative operating points (mean$\pm$std over 3 seeds). Teacher drop for ADS-C is exactly $0$ by Proposition~\ref{prop:argmax} (observed $\le 1.4\times10^{-6}\pp$).}
\label{tab:headline}
\begin{tabular}{llccc}
\toprule
& \textbf{Defense} & $\bm{\lambda}$ & \textbf{$\calS$ top-1 (\%)} & \textbf{$\calT$ drop (pp)} \\
\midrule
\multirow{6}{*}{\rotatebox{90}{CIFAR-100}}
& None    & --- & $47.34\pm0.37$ & $0.00$ \\
& ADS     & $0.5$ & $41.44\pm0.09$ & $12.19$ \\
& ADS     & $3$   & $4.21\pm0.05$  & $64.01$ \\
& ADS-C   & $1$   & $37.16\pm0.27$ & $\mathbf{0.00}$ \\
& ADS-C   & $3$   & $\mathbf{29.95\pm0.23}$ & $\mathbf{0.00}$ \\
& Hard-label & --- & $47.01\pm0.64$ & $0.00$ \\
\midrule
\multirow{6}{*}{\rotatebox{90}{CIFAR-10}}
& None    & --- & $81.44\pm0.43$ & $0.00$ \\
& ADS     & $0.5$ & $71.83\pm0.27$ & $11.26$ \\
& ADS     & $3$   & $12.94\pm0.05$ & $77.61$ \\
& ADS-C   & $1$   & $64.74\pm0.10$ & $\mathbf{0.00}$ \\
& ADS-C   & $3$   & $\mathbf{51.80\pm0.38}$ & $\mathbf{0.00}$ \\
& Hard-label & --- & $81.55\pm0.47$ & $0.00$ \\
\midrule
\multirow{5}{*}{\rotatebox{90}{Tiny-ImageNet}}
& None    & --- & $47.84\pm0.59$ & $0.00$ \\
& ADS     & $1$ & $43.29\pm0.20$ & $8.97$ \\
& ADS     & $3$ & $19.09\pm0.15$ & $46.65$ \\
& ADS-C   & $1$ & $45.19\pm0.41$ & $\mathbf{0.00}$ \\
& ADS-C   & $3$ & $\mathbf{34.51\pm0.28}$ & $\mathbf{0.00}$ \\
\bottomrule
\end{tabular}
\end{table}

\subsection{Decomposition of the unbudgeted poison's damage}
\label{sec:results_decomp}
The budget also functions as an analytical instrument. It separates the unbudgeted poison's student damage into an argmax-safe component, which survives the budget, and flip collateral, which the budget removes (Fig.~\ref{fig:decomposition}). At matched $\lambda=0.5$ on CIFAR-100, the unbudgeted poison's $5.9\pp$ student drop decomposes into $2.8\pp$ of argmax-safe corruption (retained at zero teacher cost) and $3.1\pp$ of flip collateral (purchased at $12.2\pp$ of teacher accuracy); on CIFAR-10, $5.0\pp$ of $9.6\pp$ survives ($52\%$). At smaller strengths the argmax-safe share is larger still ($87\%$ at $\lambda=0.2$ on CIFAR-100). Two conclusions follow. First, roughly half of the student damage that unmodified ADS purchased with teacher accuracy was genuine dark-knowledge corruption that required no payment at all. Second, because the cost ceiling is removed, ADS-C can increase $\lambda$ far past the point where unmodified ADS becomes unusable, more than recovering the remainder: its zero-cost $-17.4\pp$ (CIFAR-100) / $-29.6\pp$ (CIFAR-10) at $\lambda=3$ exceeds the unbudgeted poison's total damage at any teacher cost below $27$--$33\pp$. This decomposition is, to our knowledge, the first quantitative account of \emph{how} ADS corrupts dark knowledge in classification: the durable damage is corruption of non-target structure, not label noise. This conclusion is further corroborated by Sec.~\ref{sec:dials_lmin} from an independent direction.

\begin{figure}[!t]
  \centering{
  \includegraphics[width=\columnwidth]{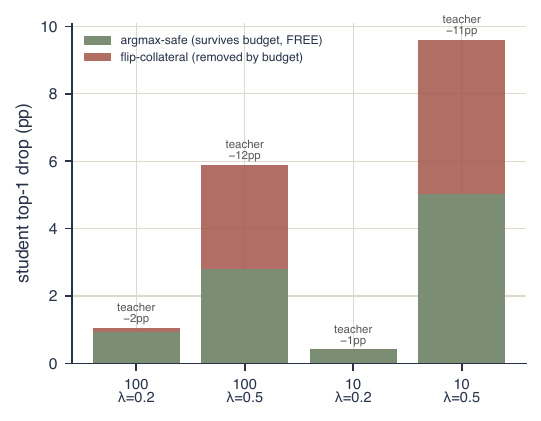}}\\
  \caption{Decomposition of unmodified ADS damage at matched $\lambda$. The unbudgeted student drop splits into an argmax-safe part (survives the budget, at zero cost) and flip collateral (purchased at $11$--$12\pp$ of teacher accuracy).}
  \label{fig:decomposition}
\end{figure}

\subsection{Ablation: softening front-ends at matched served fidelity}
\label{sec:results_frontier}
Finally we place the strongest softening variant, namely the gated top-logit reduction of Sec.~\ref{sec:dominated}, composed with the ADS-C against ADS-C with no temperature scaling. We sweep the softener's gate over five settings and $\lambda$ over the full grid ($360$ runs total across both datasets; teacher drop $\equiv 0$ everywhere, as guaranteed).

At matched $\lambda$ the comparison is misleading as the softener is stronger at low $\lambda$ (by up to $6.3\pp$ on CIFAR-100 and $22.2\pp$ on CIFAR-10) and weaker at high $\lambda$. But $\lambda$ is free. The axes on which a defender actually pays are teacher drop (identically zero for both variants) and served fidelity. Recomputed on the fidelity axis (Fig.~\ref{fig:fidelity_frontier}), the comparison inverts as the softener spends a large fidelity budget up front ($\dkl = 0.49$ at $\lambda=0$ for its most aggressive gate) before any poison flows, while ADS-C spends fidelity only as $\lambda$ grows. At matched served-KL, ADS-C without any softening damages the student more everywhere in the operating regime. Its advantage widens with fidelity spend, from $+1.0\pp$ at KL $0.5$ to $+3.8\pp$ at KL $0.9$ on CIFAR-100, and $+1.0$ to $+2.2\pp$ across KL $0.4$--$0.7$ on CIFAR-10. ADS-C's deepest zero-cost damage ($29.95$/$51.80$) is unmatched by any softener configuration on either dataset. The softener retains a $\le1\pp$ advantage only in the low-damage corner (KL $\le 0.3$) that is undesirable for defense. The matched-$\lambda$ crossover is thus an artifact of comparing on an axis that costs nothing; at matched served fidelity, softening is dominated. This ablation also answers the natural question of why ADS-C includes no confidence-scaling component: the strongest such component we constructed is dominated by not including one.

\begin{figure}[!t]
  \centering
  \subfloat[CIFAR-100]{\includegraphics[width=\columnwidth]{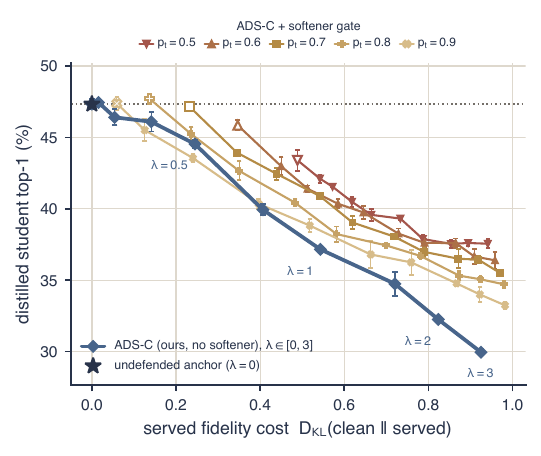}}\\
  \subfloat[CIFAR-10]{\includegraphics[width=\columnwidth]{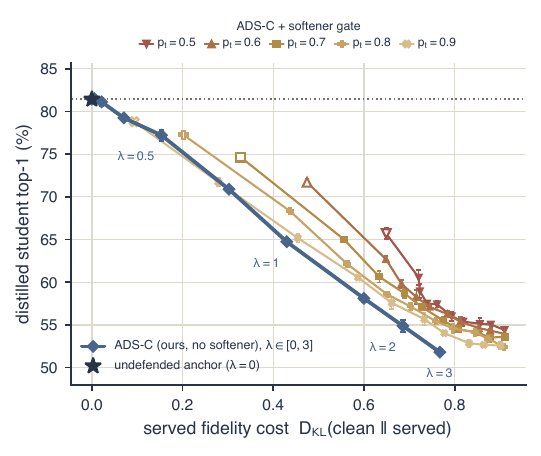}}
  \caption{The served-fidelity trade-off: distilled-student accuracy versus served $\dkl(\bp_\calT\Vert\hat{\bp})$; teacher drop $\equiv0$ for every point (mean$\pm$std whiskers over 3 seeds; $\star$ marks the undefended $\lambda{=}0$ anchor). Each softener gate starts already displaced right (open markers): fidelity spent up front, at $\lambda{=}0$, before any poison flows---up to $\dkl=0.49$ (CIFAR-100) and $0.65$ (CIFAR-10) for the most aggressive gate $p_t{=}0.5$. Poison-only ADS-C is the lower envelope throughout the operating regime, and at the deepest fidelity it spends ($\lambda{=}3$) the best softener configuration at matched served-KL leaves the student $4.0\pp$ / $2.6\pp$ higher (arrows): softening never recovers its up-front spend.}
  \label{fig:fidelity_frontier}
\end{figure}

\subsection{Controls}
\label{sec:results_controls}
\textbf{The hard-label attacker.} We train the attacker of Sec.~\ref{sec:threat} on the served argmax alone (cross-entropy; implemented as KL against the one-hot served label, which is identical). Under ADS-C the served argmax equals the clean argmax on every query. We verify zero deviations over both $50\,000$-query corpora at $\lambda\in\{2,3\}$. Therefore this attacker receives identical supervision from the defended and undefended APIs. Two observations follow. First, these overconfident teachers serve vectors so nearly one-hot that, without softening, dark knowledge confers almost no advantage to begin with, consistent with Table~\ref{tab:overconfidence}, and complementary to Sec.~\ref{sec:gift}, where the advantage appears ($+4.7\pp$) as soon as softening reveals the non-target structure. Second, and decisive for the defense: ADS-C drives the soft-label student accuracy down by $17.1\pp$ (CIFAR-100, $\lambda=3$) and $29.7\pp$ (CIFAR-10). The defended soft output is strictly worse distillation signal than the argmax it accompanies. The defense does not merely remove the incentive to distill from served probabilities, it reverses it, while the argmax channel remains exactly as informative as any accurate API must be. Sec.~\ref{sec:dials} addresses the attacker who exploits knowledge of the guarantee itself.

\subsection{Fidelity of the stolen copy}
\label{sec:results_fidelity}
Top-1 accuracy understates the damage to an attacker whose goal is a faithful replica rather than a merely accurate one (e.g., to mount downstream evasion or membership-inference attacks~\cite{papernot2017practical,shokri2017membership}). Agreement between student and teacher predictions falls in step with student accuracy (CIFAR-100: $0.49$ undefended $\to$ $0.31$ at $\lambda=3$; CIFAR-10: $0.83\to0.52$): the extracted model is not only worse, it is a worse copy.

\section{TRADING THE GUARANTEE FOR ATTACKER UNCERTAINTY}
\label{sec:dials}
Proposition~\ref{prop:argmax} is deliberately absolute, and absoluteness is itself information: an attacker who knows the defense knows that every served top-1 is honest, and that inputs whose margin lies at or below the threshold are served identically to the clean teacher. Those provably clean rows are, moreover, exactly the thin-margin inputs where near-boundary class structure, corresponding to the richest dark knowledge, resides. A defender may therefore wish to spend a small, controlled amount of the guarantee to remove these certainties. We provide two such relaxations, each with closed-form-predictable utility cost and each disabled by default; ADS-C with $q=1$ and $\lammin=0$ remains the proposed method. All results below are trained-student measurements (3 seeds, both CIFAR datasets).

\subsection{Stochastic enforcement of the budget}
\label{sec:dials_q}
The first relaxation enforces the budget \eqref{eq:budget} on each input with probability $q$; with probability $1-q$ the input receives the full requested $\lambda$ (raw poison). The endpoints recover the two extremes---$q=1$ is ADS-C, $q=0$ is unmodified ADS---and the expected teacher cost follows in closed form from Sec.~\ref{sec:transition}: $(1-q)$ times the unmodified-ADS teacher cost at $\lambda$ predicted by \eqref{eq:flipcdf}, so the defender can price the relaxation from cached outputs before deploying it. The enforcement mask is drawn once per input from a fixed seed, not per query (Sec.~\ref{sec:dials_det}).

Figure~\ref{fig:dials_q} shows the resulting family of trade-off curves, which interpolates between the ADS-C vertical and the ADS diagonal. The trade-off is ordered \emph{exactly} by $q$ on both datasets: at every level of teacher cost, the best achievable student damage is obtained by the largest $q$ whose curve reaches that cost, so no configuration with smaller $q$ is ever preferable at equal spend. The exchange below the zero-cost floor is far better than unmodified ADS offers: relaxing to $q=0.9$ at $\lambda=3$ yields a further $-4.1\pp$ of student damage (CIFAR-100, to $25.9\%$) for $6.4\pp$ of teacher cost, where unmodified ADS would require $32.8\pp$ to match---approximately a $5\times$ better exchange; on CIFAR-10, $-6.0\pp$ for $7.9\pp$ against $39.3\pp$ ($5.0\times$). Extending the sweep to $\lambda=10$ on CIFAR-100 strengthens this to a uniform statement: the relaxed configurations saturate ($q=0.9$ reaches student accuracy $22.7\%$ at $6.9\pp$, more than one student-pp per teacher-pp below the floor), and unmodified ADS becomes dominated at \emph{every} point of the trade-off---its deepest operating point (student $4.2\%$ at $64.0\pp$) is surpassed by $q=0.3$ at $\lambda=10$ (student $3.5\%$ at $49.1\pp$).

The hard-label measurements of Sec.~\ref{sec:results_controls} add an important qualification (Fig.~\ref{fig:dials_hl}). Once $q<1$, the two attacker channels separate, and at every $q$ the \emph{hard-label} student is the stronger one ($38.5\%$ versus $25.9\%$ at $q=0.9$ on CIFAR-100; $73.1$ versus $45.8\%$ on CIFAR-10): a defense-aware attacker responds to the relaxation by distilling from the served argmax, whose label noise---the flipped rows---arrives at the closed-form rate $(1-q)F(\lambda)$. Priced against this best response, the exchange below the floor is $1.35$, $1.02$, $0.89$, and $0.81$ student-pp per teacher-pp at $q=0.9$, $0.7$, $0.5$, and $0.3$ on CIFAR-100 ($1.07$, $1.02$, $0.99$, and $0.97$ on CIFAR-10): approximately break-even near the guarantee, drifting below it at deep relaxation, and in every case far from the $5\times$ available against an attacker committed to soft labels. Three observations delimit the relaxation's value. First, the best-responding attacker still degrades below the floor at every $q$---the deliberate flips are systematic, input-consistent noise that the student learns rather than averages away---so no channel recovers the undefended leakage. Second, the exchange rates place the defensible operating points at shallow relaxation ($q\gtrsim0.7$), where the trade stays at or above break-even against either channel; deeper relaxation is justified only against an attacker known to consume soft labels. Third, the guarantee point $q=1$ remains uniquely favorable: it is the only setting at which the defender pays nothing while the attacker's best response yields exactly the floor.

\begin{figure}[!t]
  \centering
  \subfloat[CIFAR-100]{\includegraphics[width=\columnwidth]{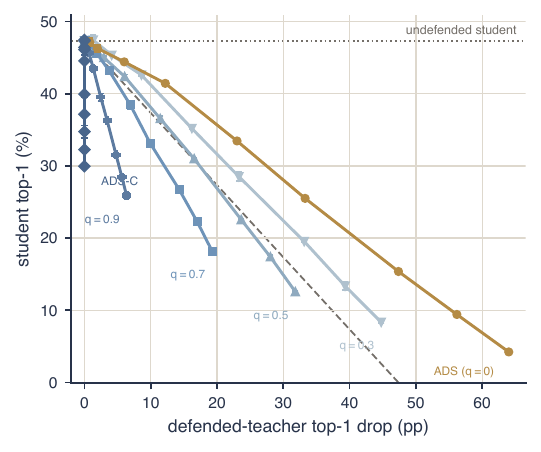}}\\
  \subfloat[CIFAR-10]{\includegraphics[width=\columnwidth]{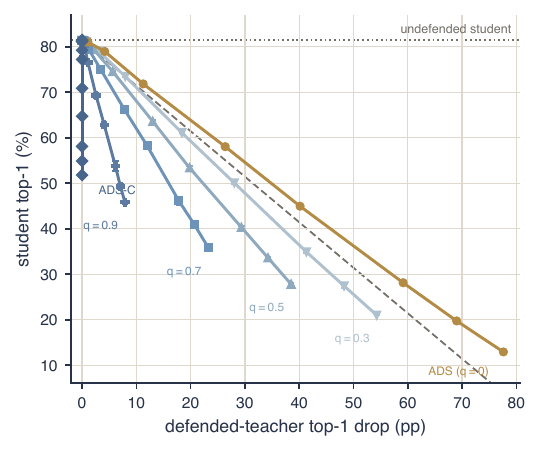}}
  \caption{Stochastic enforcement of the budget (margin cap enforced with probability $q$; 3 seeds; dashed gray: 1:1 break-even). The family of trade-off curves interpolates between ADS-C ($q=1$, vertical) and ADS ($q=0$, diagonal) and is ordered exactly by $q$ on both datasets. Below ADS-C's zero-cost floor, $q=0.9$ obtains additional student damage at approximately $5\times$ better exchange rate than unmodified ADS against the soft-label attacker; Fig.~\ref{fig:dials_hl} prices the same relaxation against the attacker's best response.}
  \label{fig:dials_q}
\end{figure}

\begin{figure}[!t]
  \centering
  \subfloat[CIFAR-100]{\includegraphics[width=\columnwidth]{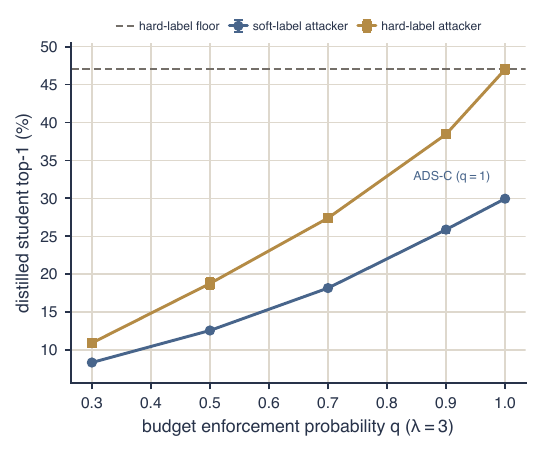}}\\
  \subfloat[CIFAR-10]{\includegraphics[width=\columnwidth]{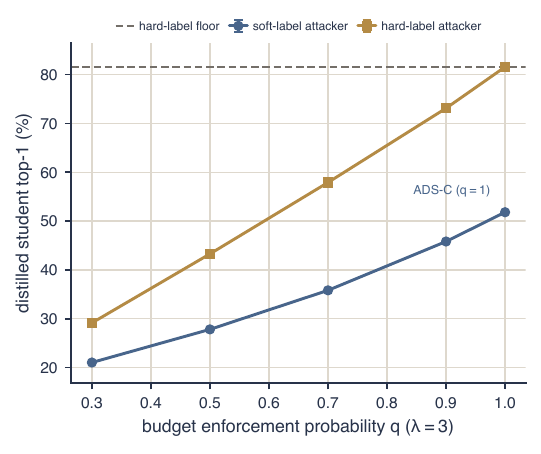}}
  \caption{The two attacker channels under stochastic enforcement ($\lambda=3$, 3 seeds): distilled-student accuracy of the soft-label attacker (served probabilities) and the hard-label attacker (served argmax) as a function of the enforcement probability $q$. At $q=1$ the argmax channel is identical to the undefended API floor (dashed) while the soft channel lies $17.1\pp$ (CIFAR-100) and $29.7\pp$ (CIFAR-10) below it. At every $q<1$ the attacker's best response flips to hard labels, whose deliberate label noise arrives at the closed-form rate $(1-q)F(\lambda)$; both channels nevertheless remain below the floor throughout.}
  \label{fig:dials_hl}
\end{figure}

\subsection{Defensive flipping with a minimum $\lambda$}
\label{sec:dials_lmin}
The second relaxation addresses the clean-row exposure directly: serve every input with strength at least $\lammin$, i.e., $\lambda_{\mathrm{eff}}(\bx)=\max(\lammin, \lambda(\bx))$. Inputs whose flip threshold lies below $\lammin$ are deliberately flipped, which we refer to as \emph{defensive flipping}, concentrated by construction on the thinnest-margin rows, where flips are least costly. The gross cost is again available in closed form from the transition curve, $F(\lammin)$, and the \emph{net} cost is smaller, for a geometric reason: thin-margin rows are disproportionately already misclassified, and on such rows the true class is often the runner-up ($68\%$ of flipped misclassified rows on CIFAR-10, $33\%$ on CIFAR-100), so a fraction of defensive flips land on the true label and partially offset the gross cost (net cost $\approx{\tfrac{1}{2}}$ gross on CIFAR-100; the offset weakens with $K$). This is a consequence of margin geometry rather than of any corrective aim in the poison, and we report it as such.

Measured end-to-end, the minimum-strength floor serves a single purpose, at near-zero cost: at $\lammin=0.05$, $\lambda=3$, \emph{no served row is clean} (the clean-row fraction falls from $0.4\%$ to ${\approx}0$) at a teacher cost of $0.30\pp$ (CIFAR-100) / $0.16\pp$ (CIFAR-10), with student accuracy statistically indistinguishable from the ADS-C floor ($30.1$ vs.\ $29.95$; $51.4$ vs.\ $51.80$). It is a hardening mechanism---it removes the attacker's certainty about which rows are unperturbed---not a means of increasing damage. Nor does it open a side channel: the hard-label attacker under $\lammin\in\{0.05,0.2\}$ remains within seed noise of its leakage floor ($46.7$/$47.3$ vs.\ $47.0\%$ on CIFAR-100; $81.3$/$81.5$ vs.\ $81.6\%$ on CIFAR-10), as the $\le0.2\%$ label-noise rate predicts.

A stronger variant is conceivable: sacrificing the thin-margin rows \emph{entirely} (serving them the full unbudgeted $\lambda$) is the margin-ranked analogue of stochastic enforcement, and on served-distribution corruption (poison-KL) it appeared to dominate stochastic sacrifice by a factor of ${\sim}2$ in our training-free pre-analysis. The trained students overturn this prediction (Fig.~\ref{fig:dials_lmin}): margin-ranked sacrifice cannot push the student meaningfully below the ADS-C floor (best case $29.15\%$ vs.\ floor $29.95\%$ on CIFAR-100, at $6.0\pp$ teacher cost), whereas stochastic sacrifice at the same ${\sim}6\pp$ reaches $25.9\%$ and continues to $8.3\%$ at higher spend. Concentrating heavy poison on near-boundary rows inflates the served KL divergence while teaching the attacker's student little it would not have learned regardless: those rows carry little \emph{informative} dark knowledge relative to their contribution to the KL divergence. We report this negative result because it provides independent, converging evidence for the mechanism identified in Sec.~\ref{sec:results_decomp}: what damages a distilling student is corruption of the informative inter-class structure on representative inputs, not maximization of distributional distortion where distortion is inexpensive.

\begin{figure}[!t]
  \centering
  \subfloat[CIFAR-100]{\includegraphics[width=\columnwidth]{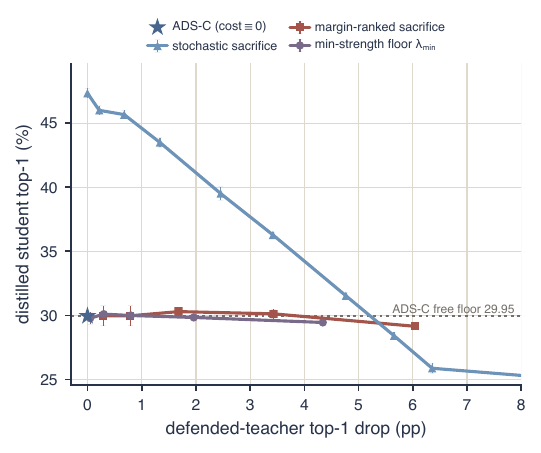}}\\
  \subfloat[CIFAR-10]{\includegraphics[width=\columnwidth]{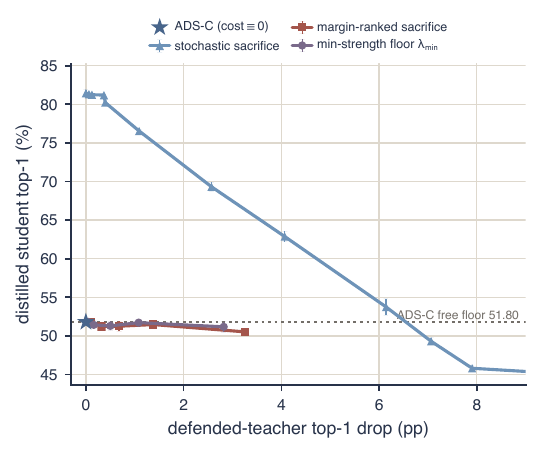}}
  \caption{Defensive flipping versus stochastic sacrifice, in the low-cost region ($\star$: ADS-C at zero teacher cost; dotted rule: the ADS-C free floor; error bars $\pm 1$ std over 3 seeds). Stochastic sacrifice (triangles; the $q$-relaxation of Sec.~\ref{sec:dials_q}, drawn as its lower envelope over the $q,\lambda$ sweep) descends below the floor; margin-ranked sacrifice (squares) is dominated---heavy poison on thin-margin rows inflates served KL divergence but not student damage; and the minimum-strength floor $\lammin$ (circles), like margin-ranked shown at the deployment strength $\lambda{=}3$, sits at the ADS-C floor at $\le0.3\pp$ teacher cost while guaranteeing that no served row is clean.}
  \label{fig:dials_lmin}
\end{figure}

\subsection{Determinism and repeated queries}
\label{sec:dials_det}
Any defense that randomizes \emph{per query} is invertible by repetition: querying the same input many times and aggregating the responses recovers the clean signal. The mechanisms above are immune to this attack by construction. The base ADS-C perturbation is a deterministic function of the input (teacher logits and frozen proxy clones), so repeated queries return the identical perturbed vector; the enforcement mask of Sec.~\ref{sec:dials_q} is drawn per \emph{input} from a fixed seed, not per query, so a sacrificed input is consistently sacrificed; and $\lammin$ is inherently per-input deterministic. Repeated queries therefore provide the attacker no additional information. The residual cost of determinism is a persistent error set under $\lammin$ (the same flipped inputs are always flipped)---for a deployed API arguably preferable to nondeterministic answers---and identifying that set would require access to the clean teacher, which is the asset under protection. The utility statement weakens in form but remains strong: from ``accuracy identical'' to ``accuracy drop at most $F(\lammin)$, known in closed form before deployment, and concentrated on inputs the teacher was already uncertain about.''

\section{DISCUSSION AND LIMITATIONS}
\label{sec:discussion}
\textbf{What the guarantee does and does not cover.} Proposition~\ref{prop:argmax} protects the served argmax---the quantity consumed by classification API users. Applications that consume the full served distribution (e.g., downstream risk calibration) pay the served-fidelity cost, which we quantify explicitly: it is the horizontal axis of Fig.~\ref{fig:fidelity_frontier}, and at the recommended operating point amounts to a served KL of $0.8$--$0.9$ (CIFAR-100). In the dark-knowledge threat model this cost falls almost entirely on the attacker: the honest consumer of a classification API reads the label.

\textbf{Attacker adaptivity.} Our main results use the standard fixed distillation attacker. The hard-label attacker is reduced to the distillation floor every classification API shares. Repetition attacks are addressed by per-input determinism (Sec.~\ref{sec:dials_det}). Attacks that estimate and subtract the poison would need to reconstruct a hidden gradient direction through a black-box interface; we view a formal analysis of this inversion problem as valuable future work.

\textbf{Scale and generality.} Our evidence is CIFAR-10, CIFAR-100, and Tiny-ImageNet. Nothing in ADS-C is architecture- or scale-specific, and the Tiny-ImageNet teacher already covers the deployment-realistic case of a strong pretrained model, whose overconfidence (Table~\ref{tab:overconfidence}) shows the phenomenon we exploit is a property of model quality rather than of small benchmarks. Validation on large-scale production teachers and modern architectures (e.g., vision transformers) remains future work; the transition-curve analysis of Sec.~\ref{sec:transition} provides a tool, computable from teacher and proxy outputs alone, to predict, before any training, how a new teacher will behave.

\textbf{The trade-off-rate lens.} We encourage future inference-time defenses to report (i) utility cost with the axes made explicit, and (ii) frontiers at matched served fidelity rather than matched internal knob settings; Sec.~\ref{sec:results_frontier} shows the latter comparison can invert the former.

\section{CONCLUSION}
\label{sec:conclusion}
We adapted Antidistillation Sampling to classification and showed that its behavior there is governed by teacher overconfidence: an inert window in which the defense measurably affects neither party, a transition curve that predicts the defended teacher's cost across the entire sweep, and a saturating regime in which the direct transfer trades below break-even. The perturbation itself is faithful and correctly aimed at dark knowledge; the failure is margin-blindness at composition, on teachers whose confidence is extremely heterogeneous. Temperature softening rescales the transition curve exactly as the theory predicts, which makes it a valuable probe and demonstrates that the availability of dark knowledge is what limits the perturbation at low strengths; but it is not a remedy: the same rescaling collapses the teacher, and every temperature configuration lies on the same unfavorable trade-off curve. ADS-C enforces the missing margin-awareness where both the margin and the perturbation are visible---at composition---with a closed-form per-input budget that provably preserves every served prediction. The result is, to our knowledge, the first antidistillation defense for classification with exactly zero utility cost: $17.4\pp$ (CIFAR-100), $29.6\pp$ (CIFAR-10), and $13.3\pp$ (Tiny-ImageNet) of distilled-student degradation at a teacher cost of $0.00\pp$. Measured against the complementary channel, the guarantee reverses the economics of extraction: an attacker restricted to the served argmax obtains exactly the distillation floor that every accurate classification API shares, while the defended soft output trains a student well below that floor. The served probabilities become strictly worse supervision than the labels they accompany. Around the guarantee we provide two relaxations with closed-form-predictable cost, stochastic enforcement and defensive flipping, that allow a defender to exchange a known amount of utility for attacker uncertainty, priced throughout against the attacker's best response across both supervision channels, together with a negative result showing that stochastic sacrifice outperforms margin-ranked sacrifice, further evidence that the durable damage resides in informative dark knowledge. A defense that costs nothing changes the deployment calculus as there is no longer a utility argument for serving undefended soft labels.

\section*{ACKNOWLEDGMENTS}
The authors thank the MS Artificial Intelligence Program at the Rochester Institute of Technology for computational resources, and the Fulbright Program for scholarship support to the first author. M. J. Khojasteh acknowledges support from the Gleason Endowment and Provost’s Learning Innovation Grant at RIT. 
The authors acknowledge Research Computing at the Rochester Institute of Technology for providing computational resources and support that have contributed to the research results reported in this publication~\cite{https://doi.org/10.34788/0s3g-qd15}.

\bibliographystyle{IEEEtran}
\bibliography{references}

\begin{IEEEbiography}[{\includegraphics[width=1in,height=1.25in,clip,keepaspectratio]{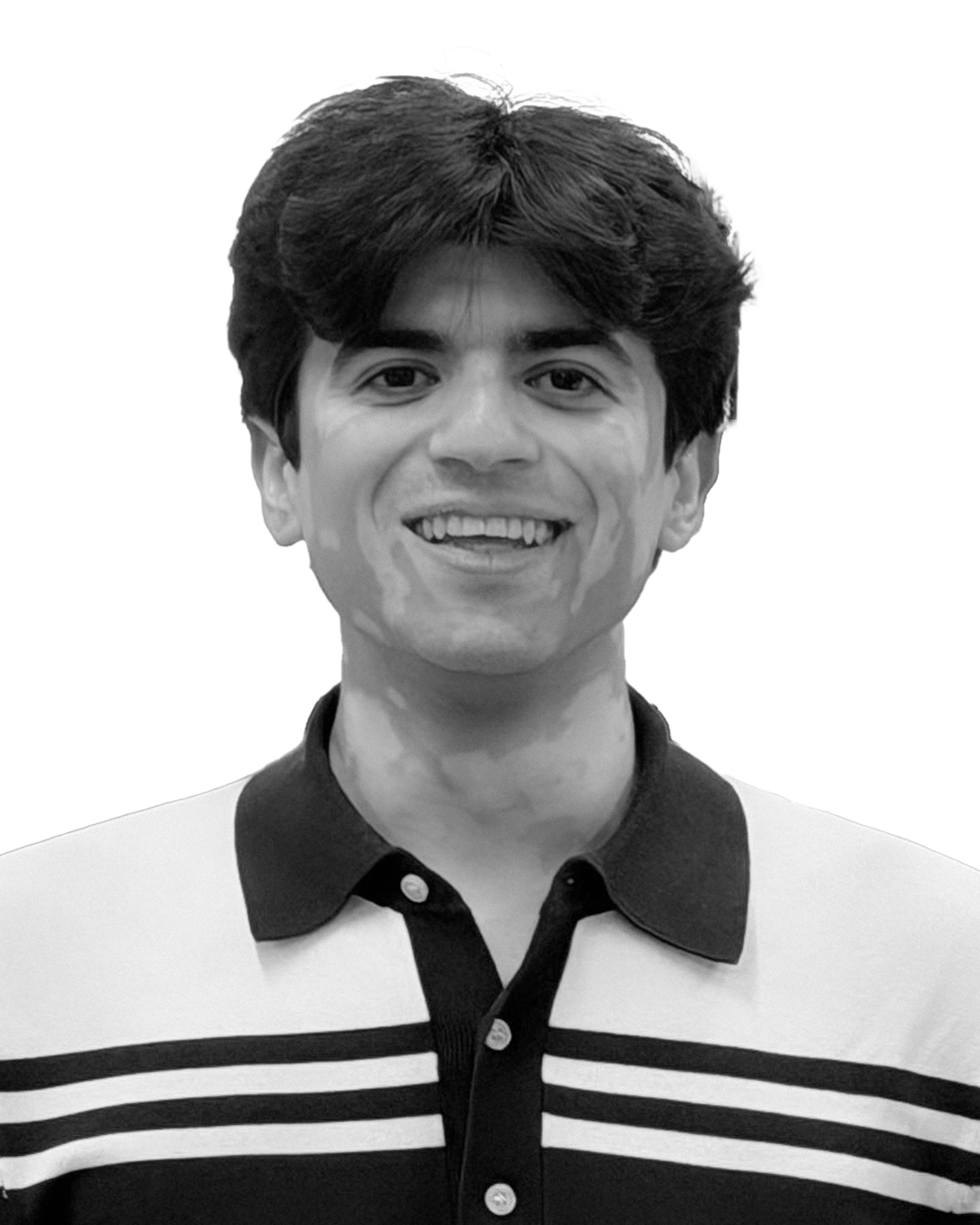}}]
{Khawaja Abaid Ullah}~received the B.S. degree in Computer Science from the University of Narowal, Pakistan in 2021, and the M.S. degree in Artificial Intelligence from the Rochester Institute of Technology (RIT), Rochester, NY, USA, in 2026, as a Fulbright Scholar. He is currently pursuing the Ph.D. degree in Electrical and Computer Engineering at RIT, serving as a Graduate Research Assistant in the Khojasteh Autonomous Systems (KAS) Lab. His research interests include deep learning, robotics, and autonomous systems.

\end{IEEEbiography}

\begin{IEEEbiography}[{\includegraphics[width=1in,height=1.25in,clip,keepaspectratio]{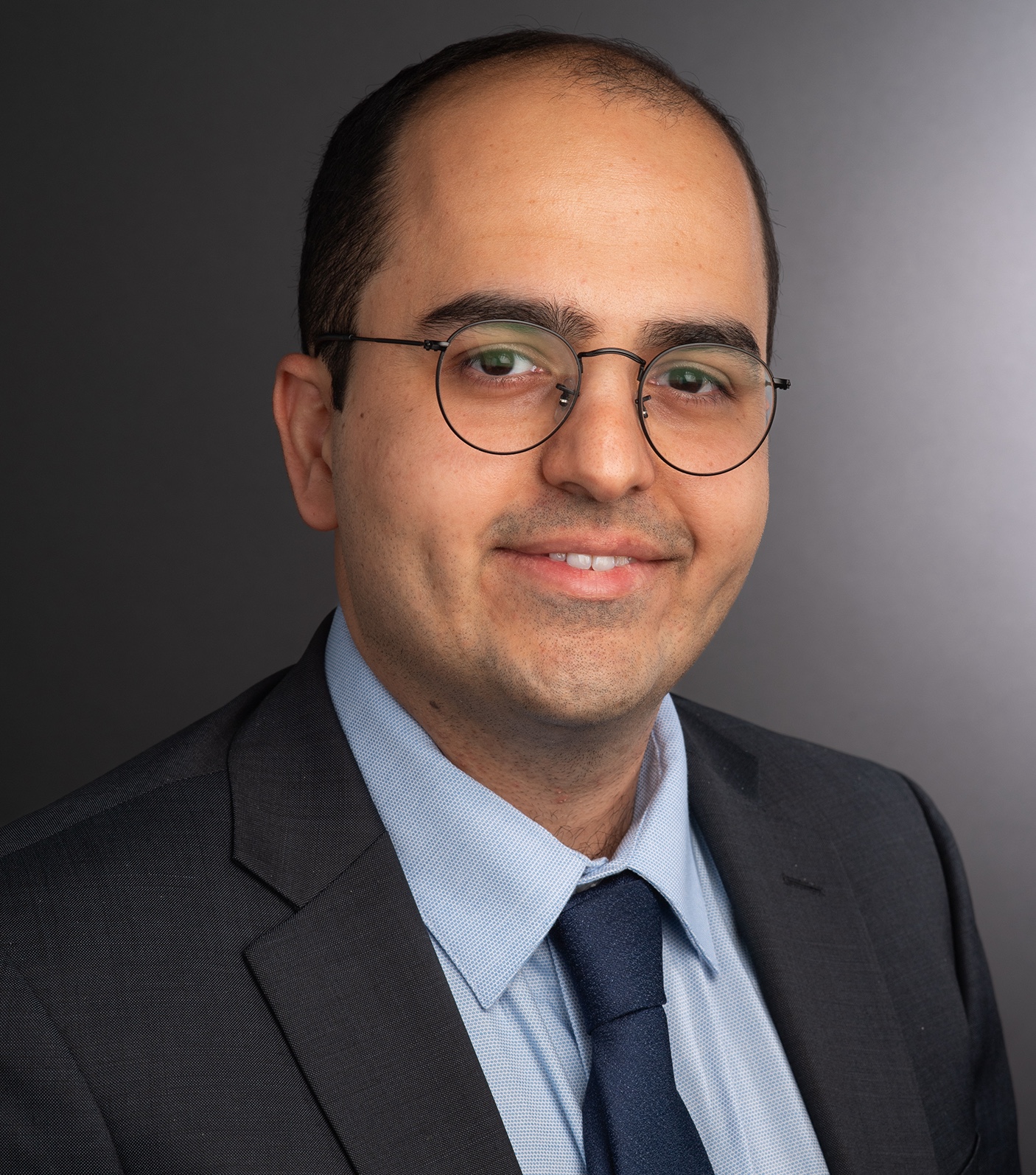}}]{Mohammad Javad Khojasteh} ~(Member,
IEEE) received the Ph.D. and M.Sc. degrees in electrical and computer engineering from the University of California, San Diego in 2019 and 2017, respectively. 
He is a Gleason Endowed Assistant Professor with the Rochester Institute of Technology (RIT). Before joining RIT, he held postdoctoral positions at Marine Physical Laboratory (MPL) at Scripps Institution of Oceanography (SIO), Department of Mechanical Engineering and Laboratory for Information and Decision Systems (LIDS) at Massachusetts Institute of Technology (MIT), and Center for Autonomous Systems and Technologies (CAST) at California Institute of Technology (Caltech), where he worked with Team CoSTAR as visitor at NASA Jet Propulsion Laboratory. He received the Gleason Chair in 2024 and the 2025 Provost’s Learning Innovation Grant at RIT. Dr. Khojasteh's publications, co-authored with colleagues and students, have received awards, including Tammy L. Blair Student Paper Award (second place) from the International Society of Information Fusion. He is an Associate Editor for IEEE Signal Processing Letters.
\end{IEEEbiography}

\section*{APPENDIX}
\subsection{Notation}
\label{app:notation}
Table~\ref{tab:notation} summarizes the symbols used throughout the paper, grouped by role.

\begin{table}[!ht]
\centering
\caption{Summary of notation.}
\label{tab:notation}
\small
\renewcommand{\arraystretch}{1.15}
\setlength{\tabcolsep}{4pt}
\begin{tabular}{@{}>{\raggedright\arraybackslash}p{0.22\linewidth} >{\raggedright\arraybackslash}p{0.72\linewidth}@{}}
\toprule
\textbf{Symbol} & \textbf{Description} \\
\midrule
$K$, $\Delta^{K-1}$ & Number of classes; probability simplex \\
$\calT,\calS,\calP$ & Teacher, student (attacker), proxy models \\
$\calP_{+},\calP_{-}$ & Frozen proxy clones at $\theta_\calP\pm\varepsilon g$ \\
$\bz$, $\sigma(\cdot)$ & Logit vector; softmax \\
$t$, $p_{\max}$ & Served top class; top-class probability \\
$\mathrm{gap}_j$ & Margin $z_t - z_j$ to rival $j$ \\
$\ell$, $g$ & Proxy holdout loss (per-sample mean); its gradient \\
$\widehat{\Delta}(y\mid\bx)$ & Antidistillation penalty \eqref{eq:fd} \\
$c_j$, $c(\bx)$ & Rival gain rate $\widehat{\Delta}_j-\widehat{\Delta}_t$; contrast \eqref{eq:contrast} \\
$\lambda$, $\lambda(\bx)$ & Requested strength; per-input budget \eqref{eq:budget} \\
$\lambda^\ast(\bx)$ & Flip threshold \eqref{eq:flip} \\
$F(\lambda)$, $\lambda_c$ & Flip-threshold CDF \eqref{eq:flipcdf}; transition threshold \\
$m$ & Served-margin floor \\
$\tau$, $T$ & Served temperature (Secs.~\ref{sec:transition}--\ref{sec:softening}) \\
$q$ & Budget-enforcement probability (Sec.~\ref{sec:dials_q}) \\
$\lammin$ & Minimum served strength (Sec.~\ref{sec:dials_lmin}) \\
$\varepsilon$ & Finite-difference step \\
$\hat{\bp}$ & Served (defended) probability vector \\
$\dkl(\cdot\Vert\cdot)$ & Kullback--Leibler divergence \\
\bottomrule
\end{tabular}
\end{table}

\subsection{Calibration of the finite-difference step $\varepsilon$}
\label{app:epsilon}
Following the ADS paper~\cite{savani2025antidistillation}, the penalty \eqref{eq:fd} approximates the exact directional derivative $\langle \nabla_{\theta}\log p(y\mid\bx;\theta_\calP),\, g\rangle$, which we compute directly as a Jacobian--vector product (JVP) on a reference batch. We select $\varepsilon$ by sweeping over a log-grid and scoring the per-coordinate magnitude ratio and angular agreement between \eqref{eq:fd} and the JVP. Too-small $\varepsilon$ loses the signal to floating-point cancellation; too-large $\varepsilon$ leaves the linear regime (Fig.~\ref{fig:epsilon_calibration}). The selected steps, $\varepsilon = 3\times10^{-3}$ on both CIFAR datasets and $\varepsilon = 10^{-2}$ on Tiny-ImageNet, each achieve a magnitude ratio of $\approx1.00$ under our GPU (TF32) inference stack; the calibration is hardware-arithmetic-specific and should be re-run if the serving stack changes numerics.

\begin{figure}[!t]
  \centering
  \includegraphics[width=\columnwidth]{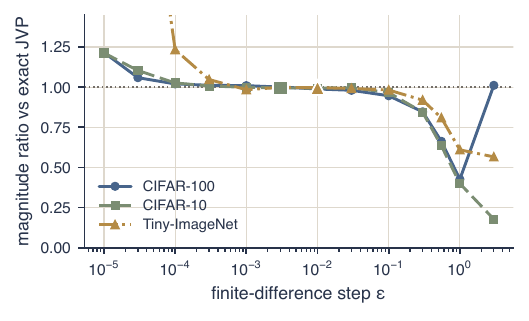}
  \caption{Calibration of the finite-difference step. Magnitude ratio between \eqref{eq:fd} and the exact JVP as a function of $\varepsilon$ (log axis); the enlarged marker on each curve is the selected step. Below the plateau, floating-point cancellation destroys the signal; above it, the finite difference leaves the linear regime.}
  \label{fig:epsilon_calibration}
\end{figure}

\subsection{Sensitivity to the margin floor $m$}
\label{app:margin_floor}
The guarantee is $m$-independent: recomputing the full serving pipeline over $m\in\{0.01,0.05,0.1,0.2\}$ at $\lambda\in\{1,2\}$, the defended teacher's top-1 drop is exactly $0.00\pp$ at every setting on both datasets, as Proposition~\ref{prop:argmax} requires. What $m$ modulates is only how much poison the budget passes, and mildly: at $\lambda=2$ on CIFAR-100 the mean effective strength moves from $1.028$ ($m=0.01$) to $1.009$ ($m=0.2$) and the served poison-KL from $0.842$ to $0.757$ (${\approx}10\%$ across a $20\times$ range of $m$); CIFAR-10 behaves identically ($0.702\to0.627$). At $\lambda=1$ the spread is smaller still. Conclusions are therefore insensitive to $m$; we fix $m=0.05$ as a mid-range served-margin service level. (Rows whose original margin is below $m$ receive zero poison and keep their original margin, per the $\min(m,\text{original})$ guarantee---hence the minimum achieved served margin can lie below $m$.) A student-side confirmation (distilled students at $m\in\{0.01,0.05,0.1,0.2\}$, $\lambda\in\{1,2\}$, three seeds per cell) shows the same insensitivity where it matters (Fig.~\ref{fig:msens_students}): student top-1 varies by at most $1.7\pp$ across the $20\times$ range of $m$ on CIFAR-100 ($\lambda=2$: $32.3\%$ at $m=0.01$ versus $33.9\%$ at $m=0.2$) and by at most $2.9\pp$ on CIFAR-10 ($53.7\%$ versus $56.6\%$), against a poison-induced degradation of $10$--$27\pp$ at these $\lambda$. Larger floors admit marginally less poison, as expected, and the defended teacher's top-1 drop remains exactly zero at every cell.

\begin{figure}[!t]
  \centering
  \subfloat[CIFAR-100]{\includegraphics[width=\columnwidth]{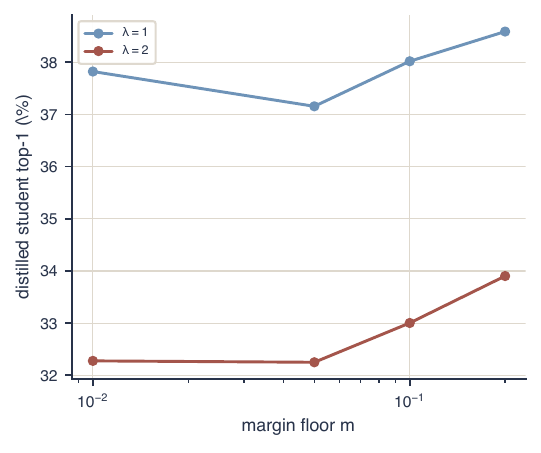}}\\
  \subfloat[CIFAR-10]{\includegraphics[width=\columnwidth]{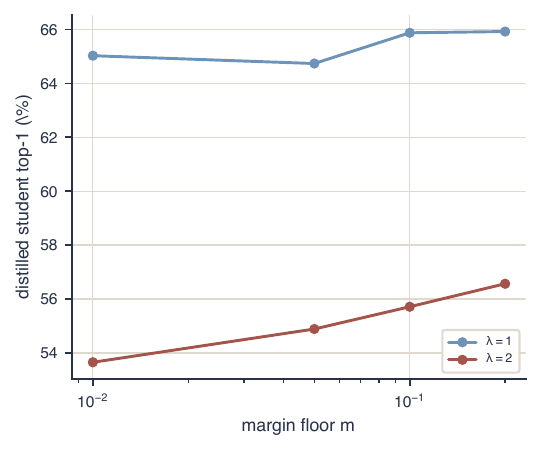}}
  \caption{Student-side sensitivity to the margin floor $m$ ($\lambda\in\{1,2\}$, 3 seeds). Distilled-student top-1 varies by at most $1.7\pp$ (CIFAR-100) and $2.9\pp$ (CIFAR-10) over the $20\times$ range $m\in[0.01,0.2]$, against a poison-induced degradation of $10$--$27\pp$ at these strengths; the defended teacher's top-1 drop is exactly zero at every cell.}
  \label{fig:msens_students}
\end{figure}

\subsection{Softening operators used in the ablation}
\label{app:scaler_def}
The ablation of Sec.~\ref{sec:results_frontier} uses a gated, argmax-preserving softener defined in log-odds space. Let $G = z_t - \operatorname{logsumexp}_{j\neq t} z_j$, which equals $\operatorname{logit}(p_{\max})$ exactly. Given a gate $p_g$ with $G_g = \operatorname{logit}(p_g)$, inputs with $G \le G_g$ are untouched; over-confident inputs are mapped to the served confidence $G' = G_g$ by lowering the top logit only ($z_t \leftarrow G' + \operatorname{logsumexp}_{j\neq t} z_j$), leaving all non-target structure intact for the dark-knowledge channel. We sweep $p_g \in \{0.5, 0.6, 0.7, 0.8, 0.9\}$; the constant-temperature comparison is Sec.~\ref{sec:softening} (E-CT). The operator is inspired in spirit by asymmetric temperature scaling~\cite{li2022asymmetric}, which applies separate temperatures to the correct and wrong classes so that wrong-class discriminability survives softening; ours is the gated, serving-side limiting case in which the wrong-class logits are left exactly unchanged. Additionally, where asymmetric temperature scaling selects a fixed pair of temperatures by grid search over a candidate space, our operator adapts the effective softening per query.

\subsection{Reproducibility}
\label{app:repro}
Table~\ref{tab:recipes} reports the full training recipes. All models are trained with SGD (momentum $0.9$, Nesterov, weight decay $10^{-4}$) under cosine annealing; teachers and proxies use a 5-epoch linear warmup, plain cross-entropy, and \emph{no} label smoothing. Teachers and proxies train on the augmented training split (random crop with 4-pixel padding at $32\times32$, 8-pixel at $64\times64$, plus horizontal flip); the Tiny-ImageNet teacher is an ImageNet-pretrained ResNet-50 fine-tuned end to end with the stem adapted to $64\times64$ (first-conv stride $2\to1$, max-pool removed, fresh 200-way head). The clone gradient $g$ is computed on a held-out $10\%$ of the training split. The attacker's student trains on the cached (image, served-vector) pairs with no augmentation. Every figure is generated by a script that emits a sibling metrics JSON with the underlying numbers, and aggregates for every (variant, $\lambda$, seed) cell are archived. Code, configuration files, and exact command lines: \href{https://github.com/the-kas-lab/ADS-C}{https://github.com/the-kas-lab/ADS-C}.

\begin{table}[!t]
\centering
\renewcommand{\arraystretch}{1.1}
\caption{Training recipes (SGD, momentum $0.9$, Nesterov, weight decay $10^{-4}$, cosine annealing throughout).}
\label{tab:recipes}
\small
\setlength{\tabcolsep}{4pt}
\begin{tabular}{llccc}
\toprule
& \textbf{Model} & \textbf{Epochs} & \textbf{LR} & \textbf{Batch} \\
\midrule
\multirow{3}{*}{\rotatebox{90}{C-100}}
& Teacher (ResNet-110) & 200 & 0.1 & 256 \\
& Proxy (ResNet-20) & 50 & 0.1 & 256 \\
& Student (ResNet-20) & 50 & 0.1 & 512 \\
\midrule
\multirow{3}{*}{\rotatebox{90}{C-10}}
& Teacher (ResNet-56) & 200 & 0.1 & 256 \\
& Proxy (ResNet-20) & 50 & 0.1 & 256 \\
& Student (ResNet-20) & 50 & 0.1 & 512 \\
\midrule
\multirow{3}{*}{\rotatebox{90}{Tiny-IN}}
& Teacher (ResNet-50$^\dagger$) & 40 & 0.02 & 128 \\
& Proxy (ResNet-18) & 60 & 0.1 & 128 \\
& Student (ResNet-18) & 50 & 0.1 & 256 \\
\bottomrule
\multicolumn{5}{l}{\footnotesize $^\dagger$ImageNet-pretrained, fine-tuned end to end at $64\times64$.}
\end{tabular}
\end{table}

\end{document}